%% file: main.tex
\documentclass{article}

 \usepackage[preprint]{neurips_2026}

\usepackage[utf8]{inputenc} 
\usepackage[T1]{fontenc}    
\usepackage[hidelinks]{hyperref}       
\usepackage{url}            
\usepackage{graphicx}
\usepackage{booktabs}       
\usepackage{amsfonts}       
\usepackage{nicefrac}       
\usepackage{microtype}      
\usepackage{xcolor}         
\usepackage{amsmath}
\usepackage{mathtools}
\usepackage{multirow}
\usepackage{multicol}
\usepackage{algorithm}
\usepackage{algorithmic}

\usepackage{amsmath}
\usepackage{amssymb}
\usepackage{mathtools}
\usepackage{amsthm}
\usepackage{enumitem}
\usepackage{xcolor}

\theoremstyle{plain}
\newtheorem{theorem}{Theorem}[section]
\newtheorem{proposition}[theorem]{Proposition}

\theoremstyle{definition}

\theoremstyle{remark}

\newcommand{\dsafe}{D_{\mathrm{safe}}}
\newcommand{\dsft}{D_{\mathrm{SFT}}}

\usepackage{xcolor}
\usepackage{empheq}

\definecolor{mygray}{gray}{0.9}

\title{Beyond Average Safety: Chance-Constrained LLM Fine-tuning}

\author{%
  Taha Entesari, Mahyar Fazlyab\\
  Department of Electrical and Computer Engineering\\
  Johns Hopkins University\\
  \texttt{\{tentesa1, mahyarfazlyab\}@jhu.edu} \\
}

\begin{document}

\maketitle

\begin{abstract}
Fine-tuning large language models on new objectives can improve helpfulness, instruction following, or domain-specific performance, but it can also induce regressions on safety-critical prompts. Existing safety-preserving fine-tuning methods typically control average safety loss or use weighted auxiliary penalties, which can obscure rare but severe failures. We propose a chance-constrained formulation for safety-preserving fine-tuning that limits the fraction of safety examples whose degradation relative to a reference model exceeds a prescribed threshold. Because the resulting empirical chance constraint contains a discontinuous indicator, we introduce a differentiable majorization of the violation rate, yielding a tractable conservative constraint. We then develop a constraint-aware gradient descent method that treats the majorized constraint as a safe set in parameter space and minimally modifies the fine-tuning direction to preserve feasibility. The resulting update admits a closed form and produces a tail-aware safety correction that emphasizes examples near or above the degradation threshold. We conduct an extensive set of experiments on harmful fine-tuning across three different tasks and three models and show that our approach consistently outperforms the baselines that exist in the literature.
These results suggest that safety preservation in LLM fine-tuning is better viewed as a reliability-constrained optimization problem than as average-risk regularization.
\end{abstract}

\section{Introduction}

Large language models (LLMs) are increasingly adapted to downstream
objectives through supervised fine-tuning, preference optimization,
and other post-training procedures
\citep{ouyang2022training,rafailov2023direct}. In many practical
settings, however, improving a model on a target fine-tuning dataset
can inadvertently degrade its behavior on safety-sensitive prompts:
a model fine-tuned to become more helpful, domain-specialized, or
instruction-following may regress on prompts involving harmful
requests, refusal behavior, or other safety-critical interactions
\citep{qi2023fine,yang2023shadow,lermen2023lora,zhan2024removing}.
This phenomenon,  often referred to as harmful fine-tuning, 
persists even when the fine-tuning data is benign, and has motivated
a growing body of defenses operating at the alignment stage
\citep{huang2024vaccine,huang2024booster}, at the fine-tuning stage
\citep{huang2024lisa,li2025salora,yi2025gradient,yang2026asft}, in
both stages jointly \citep{nguyen2026antibody}, or as a post-hoc
patch \citep{hsu2024safe}. The shared goal of these defenses is to
resolve a fundamental tension in post-training: how to improve
performance on a desired fine-tuning task while preserving acceptable
behavior on a safety dataset.

A common approach is to combine the fine-tuning loss and the safety
loss through weighted averaging or regularization
\citep{huang2024lisa,yang2026asft,nguyen2026antibody}. While simple
and practical, this strategy controls only average behavior over
the safety dataset. This can be misleading: a model may satisfy an
\textit{average} safety constraint while still exhibiting unacceptable
degradation on a small but important subset of safety examples.
Thus, controlling the mean safety loss does not directly control
the \textit{frequency} of safety regressions.

This motivates viewing safety preservation as a \emph{reliability
constraint} rather than an auxiliary objective. Instead of bounding
the average safety degradation, we require that the degradation
remain below an acceptable threshold on all but a small
user-specified fraction of safety examples. We formalize this
through a chance-constrained formulation
\citep{charnes1959chance,nemirovski2006convex}, capturing a
reliability requirement that expectation-based penalties cannot
express.

The empirical chance constraint contains a discontinuous indicator function, which is
unsuitable for gradient-based fine-tuning. We replace it by a
differentiable majorizer, yielding a conservative surrogate on the
safety-regression rate; different choices trade smoothness for
sharpness, and an exponential majorizer in particular connects the
formulation to entropic risk \citep{follmer2002convex}, paralleling
the role of CVaR in risk-averse optimization
\citep{rockafellar2000optimization}.

To solve the resulting problem, we develop a \emph{constraint-aware
gradient descent} method that treats the majorized chance
constraint as a safe set in parameter space and, at each iteration,
computes the closest Euclidean direction to the nominal fine-tuning
gradient that satisfies a forward-invariance condition for this
safe set --- borrowed from the control barrier function literature
\citep{ames2019control}. The update coincides with standard SFT
when already feasible and applies the smallest correction needed to
remain safe otherwise, yielding a closed-form gradient filter
without a tuned penalty weight or a reactive dual variable.

This perspective differs from standard regularized or multi-task
fine-tuning and is closer in spirit to constrained-policy approaches
in safe reinforcement learning \citep{achiam2017constrained} and to
recent constrained formulations of LLM unlearning
\citep{entesari2025constrained}. Combining chance constraints,
differentiable majorization, and constraint-aware gradient descent
yields a principled mechanism for improving fine-tuning performance
while limiting safety regressions.

In summary, our main contributions are as follows:
\begin{itemize}[leftmargin=*]
    \item We formulate safety-preserving LLM fine-tuning as a
    chance-constrained optimization problem that explicitly bounds
    the fraction of safety examples whose degradation relative to a
    reference model exceeds a prescribed threshold.
    \item We introduce a differentiable majorization of the
    empirical chance constraint, yielding a tractable conservative
    constraint on the safety-regression rate and connecting
    exponential majorization to entropic risk.
    \item We develop a  constraint-aware gradient descent
    method that minimally modifies the nominal fine-tuning update to
    satisfy a forward-invariance condition for the majorized safety
    constraint.
    \item We empirically demonstrate that the proposed method
    improves fine-tuning performance while reducing safety-regression
    rates compared with standard weighted and constrained baselines.
    We showcase the superiority of our methodology through experiments using Qwen 3.5 4B and 9B and Llama 3.1 8B on three highly poisoned tasks (SST-2, AG News, GSM8K).
\end{itemize}

Overall, the paper advocates a simple thesis: \emph{safety
preservation in LLM fine-tuning should be treated as
reliability-constrained optimization rather than average-risk
regularization}. This shift from mean safety loss to controlled
safety-regression rate leads to a practical fine-tuning algorithm
that is both tail-aware and compatible with standard gradient-based
training.

\section{Fine-tuning with Safety Preservation}
\label{sec:setup}

We consider the post-training of a pretrained autoregressive language model $\pi_{\theta_0}(y \mid x)$ into a new model $\pi_\theta(y \mid x)$ using two datasets with distinct roles. The first dataset, denoted by $\dsft=\{(x_j,y_j)\}_{j=1}^m$, is a \emph{fine-tuning dataset} that encodes the target behavior we wish to improve, such as helpfulness, instruction following, or domain-specific competence. The second dataset, $\dsafe=\{(\tilde x_i,\tilde y_i)\}_{i=1}^n$, is a \emph{safety dataset} that captures behaviors we wish to preserve during post-training, such as refusals, safe completions, or other safety-sensitive responses. Our goal is to improve performance on $\dsft$ while avoiding unacceptable degradation on $\dsafe$.

Throughout the paper, we focus on the supervised setting in which both fine-tuning and safety are measured through cross-entropy losses. For an input-output pair $(x,y)$, the autoregressive model defines
$
\pi_\theta(y\mid x)
=
\prod_{t=1}^{T(y)}
\pi_\theta(y_t \mid x,y_{<t}),
$
where $T(y)$ denotes the sequence length. The sequence-level negative log-likelihood is therefore
$
-\log \pi_\theta(y\mid x)
=
-\sum_{t=1}^{T(y)}
\log \pi_\theta(y_t \mid x,y_{<t}).
$
We define the empirical fine-tuning loss as
\begin{equation}
\mathcal{L}_{\mathrm{SFT}}(\theta)
:=
\frac{1}{m}\sum_{j=1}^m \ell^{\mathrm{SFT}}_j(\theta),
\qquad
\ell^{\mathrm{SFT}}_j(\theta)
:=
-\log \pi_\theta(y_j\mid x_j).
\label{eq:sft_loss}
\end{equation}

\paragraph{Safety degradation.}
For each safety example $(\tilde x_i,\tilde y_i)\in \dsafe$, define the safety cross-entropy loss
\[
\ell_i^{\mathrm{safe}}(\theta)
:=
-\log \pi_\theta(\tilde y_i\mid \tilde x_i).
\]
In post-training, the relevant quantity is often not the absolute safety loss, but its degradation relative to the reference model $\pi_{\theta_0}$. We therefore define
\begin{equation}
\Delta \ell_i^{\mathrm{safe}}(\theta)
:=
\ell_i^{\mathrm{safe}}(\theta)
-
\ell_i^{\mathrm{safe}}(\theta_0)
=
-\log \pi_\theta(\tilde y_i\mid \tilde x_i)
+
\log \pi_{\theta_0}(\tilde y_i\mid \tilde x_i).
\label{eq:safe_degradation}
\end{equation}
A positive value of $\Delta \ell_i^{\mathrm{safe}}(\theta)$ means that post-training has reduced the likelihood of the desired safe completion on the $i$th safety example. This relative quantity separates degradation caused by fine-tuning from the intrinsic difficulty of the safety example. Accordingly, we define the empirical safety degradation as
\begin{equation}
\mathcal{L}_{\mathrm{safe}}(\theta)
:=
\frac{1}{n}\sum_{i=1}^n
\Delta \ell_i^{\mathrm{safe}}(\theta).
\label{eq:safe_loss}
\end{equation}

\paragraph{Expectation-based baseline.}
A standard approach to safety preservation is to constrain the average degradation,
\begin{empheq}[box=\colorbox{mygray}]{equation}
\min_{\theta}\quad
\mathcal{L}_{\mathrm{SFT}}(\theta)
\qquad
\text{s.t.}\qquad
\mathcal{L}_{\mathrm{safe}}(\theta)\le \varepsilon_{\mathrm{avg}},
\label{eq:expectation_constraint}
\end{empheq}
or to add the same term as a weighted penalty. Although simple, this controls only the mean over $\dsafe$. Thus, the constraint may be satisfied even when a small subset of safety prompts suffers large degradation, provided those failures are averaged out by many easy examples.

To formalize this limitation, let $Z$ be a nonnegative random variable representing absolute per-example safety degradation, and let $\tau$ be an unacceptable-degradation threshold.

\begin{proposition}
\label{prop:mean_not_tail}
Fix $\mu>0$, $\tau>\mu$, and $\alpha\in(0,1)$. There exists a nonnegative random variable $Z$ such that
\[
\mathbb{E}[Z]\le \mu
\qquad\text{and}\qquad
\mathbb{P}(Z>\tau)\ge \alpha,
\]
provided $\alpha\tau < \mu$.
\end{proposition}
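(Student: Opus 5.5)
The construction is a two-point distribution. We put mass $\alpha$ just above $\tau$ and the remaining mass $1-\alpha$ at zero. The hypothesis $\alpha\tau<\mu$ is exactly what lets the first atom sit strictly above $\tau$ while keeping the mean at most $\mu$.

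The key step is choosing the upper atom. Since $\alpha\tau<\mu$, we have $\tau<\mu/\alpha$, so we can pick any $b$ with $\tau<b\le\mu/\alpha$; for concreteness, take $b=\mu/\alpha$. Define $Z$ by
\[
\mathbb{P}(Z=b)=\alpha,\qquad \mathbb{P}(Z=0)=1-\alpha .
\]
This is a valid distribution because $\alpha\in(0,1)$. It is nonnegative because $b>\tau>\mu>0$.

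The two required properties then follow directly.
\begin{itemize}
\item The mean is $\mathbb{E}[Z]=\alpha b\le\mu$; with the concrete choice it equals $\mu$.
\item Since $b>\tau$ and $0\le\tau$, the event $\{Z>\tau\}$ is exactly $\{Z=b\}$. Hence $\mathbb{P}(Z>\tau)=\alpha$, which gives the required inequality.
\end{itemize}

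There is no real obstacle. The only point needing care is strictness: the tail event is $\{Z>\tau\}$, so the atom must lie strictly above $\tau$, not at $\tau$. The strict inequality $\alpha\tau<\mu$ provides exactly this room.

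It may also be worth remarking that the condition is essentially necessary. By Markov's inequality, $\mathbb{P}(Z>\tau)\le\mathbb{E}[Z]/\tau\le\mu/\tau$, so $\alpha\le\mu/\tau$ is forced. This shows that mean control limits the tail frequency only through the weak bound $\mu/\tau$.

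To connect this to the fine-tuning setting, one can realize $Z$ as the empirical distribution over a finite safety set. Suppose $\alpha n$ is an integer. Then $\alpha n$ examples each with degradation $\mu/\alpha$ and the rest with zero degradation give average degradation exactly $\mu$, while a fraction $\alpha$ of examples exceed the threshold $\tau$.
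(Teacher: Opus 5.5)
Your proof is correct and uses the same construction as the paper: a two-point law with mass $\alpha$ strictly above $\tau$ and mass $1-\alpha$ at zero, where your atom $b=\mu/\alpha$ is the paper's $\tau+\eta$ with $\eta=\mu/\alpha-\tau>0$. Your Markov remark can be sharpened: since $Z\ge 0$ and $\mathbb{P}(Z>\tau)\ge\alpha>0$, the term $\mathbb{E}[(Z-\tau)\mathbf{1}\{Z>\tau\}]$ is strictly positive, so $\mathbb{E}[Z]>\alpha\tau$, and therefore the strict condition $\alpha\tau<\mu$ is exactly necessary.
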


\begin{proof}
Let $Z=\tau+\eta$ with probability $\alpha$ and $Z=0$ otherwise, where $\eta>0$ is chosen such that $\alpha(\tau+\eta)\le \mu$. Then
\[
\mathbb{E}[Z]=\alpha(\tau+\eta)\le \mu,
\qquad
\mathbb{P}(Z>\tau)=\alpha.
\]
\end{proof}

Proposition~\ref{prop:mean_not_tail} shows that mean control does not imply reliability: a model may satisfy an average safety-degradation constraint while still regressing on a nontrivial fraction of safety prompts. Thus, instead of requiring the mean of $\Delta \ell_i^{\mathrm{safe}}(\theta)$ to be small, we require  $\Delta \ell_i^{\mathrm{safe}}(\theta)$ to remain below an acceptable threshold on most safety examples. This motivates the chance-constrained formulation developed next.

\section{Chance-Constrained Safety Preservation}
\label{sec:chance_constraints}

Motivated by the shortcomings of expectation-based formulations, we replace average safety preservation with a reliability constraint. Given a degradation threshold $\tau$ and violation level $\alpha\in(0,1)$, we require
\begin{equation}
\mathbb{P}_{(\tilde x,\tilde y)\sim \dsafe}
\left(
\Delta \ell^{\mathrm{safe}}(\theta;\tilde x,\tilde y)
\le \tau
\right)
\ge 1-\alpha.
\label{eq:chance_constraint}
\end{equation}
Here $\tau$ specifies the largest acceptable degradation relative to the reference model, while $\alpha$ specifies the tolerated fraction of safety examples that may exceed this threshold. On the finite safety dataset, the empirical chance constraint becomes
\begin{equation}
\frac{1}{n}\sum_{i=1}^n
\mathbf{1}
\left\{
\Delta \ell_i^{\mathrm{safe}}(\theta) > \tau
\right\}
\le \alpha.
\label{eq:empirical_chance_constraint}
\end{equation}
This constraint directly controls the empirical rate of safety regressions, but the discontinuous indicator is unsuitable for gradient-based fine-tuning.

\paragraph{Differentiable majorization of the chance constraint.}
To obtain a tractable conservative surrogate, we replace the indicator in~\eqref{eq:empirical_chance_constraint} by a differentiable majorizer. Let $\phi_\beta:\mathbb{R}\to\mathbb{R}_+$ be a nonnegative function satisfying
$
\mathbf{1}\{z>0\}
\le
\phi_\beta(z),
 \forall z\in\mathbb{R},
$
where $\beta>0$ controls the sharpness or smoothness of the surrogate. Then the sufficient condition
\begin{equation}
g_\beta(\theta)
:=
\frac{1}{n}\sum_{i=1}^n
\phi_\beta
\left(
\Delta \ell_i^{\mathrm{safe}}(\theta)-\tau
\right)
-\alpha \leq 0
\label{eq:majorized_chance_constraint}
\end{equation}
implies the empirical chance constraint~\eqref{eq:empirical_chance_constraint}. We therefore solve the majorized chance-constrained fine-tuning problem
\begin{empheq}[box=\colorbox{mygray}]{equation}
\min_{\theta}\quad
\mathcal{L}_{\mathrm{SFT}}(\theta)
\qquad
\mathrm{s.t.}\qquad
g_\beta(\theta)\le 0,
\label{eq:majorized_problem}
\end{empheq}
The choice of $\phi_\beta$ determines the conservativeness and smoothness of the relaxation: sharper majorizers better track the hard violation indicator, while smoother choices typically yield more stable gradients.

\paragraph{Connection to entropic risk.}
A particularly useful majorizer is the exponential function
$
\phi_\beta(z)=\exp(\beta z), \beta>0,
$
which satisfies $\mathbf{1}\{z>0\}\le \phi_\beta(z)$. Substituting this choice into~\eqref{eq:majorized_chance_constraint} gives the equivalent condition
\begin{equation}
\rho_{\beta,n}
\left(
\Delta \ell^{\mathrm{safe}}(\theta)-\tau
\right)
\le
\frac{1}{\beta}\log \alpha,
\label{eq:entropic_risk_constraint}
\end{equation}
where
\begin{equation}
\rho_{\beta,n}(X)
:=
\frac{1}{\beta}
\log
\left(
\frac{1}{n}\sum_{i=1}^n \exp(\beta X_i)
\right)
\label{eq:empirical_entropic_risk}
\end{equation}
is the empirical entropic risk \cite{follmer2002convex}. Thus, the exponential majorizer converts the empirical chance constraint into an entropic-risk constraint on safety degradation.

The next section develops a safe-gradient method for solving~\eqref{eq:majorized_problem} while maintaining feasibility of the majorized safety constraint \textit{during} fine-tuning.

\section{Constraint-Aware Gradient Descent}
\label{sec:safe_gd}

We now develop an optimization method for solving the majorized chance-constrained problem~\eqref{eq:majorized_problem}. The key idea is to treat
$
\mathcal{C}_\beta:=\{\theta: g_\beta(\theta)\le 0\}
$
as a safe set in parameter space and to modify the nominal fine-tuning direction only when necessary to keep the iterates feasible across all iterations. This yields a feasibility-aware alternative to penalty or primal-dual methods.

Ignoring the constraint, the nominal gradient flow for fine-tuning is
\begin{equation}
\dot \theta
=
-\nabla \mathcal{L}_{\mathrm{SFT}}(\theta).
\label{eq:nominal_gradient_flow}
\end{equation}
However, this dynamics may increase \(g_\beta(\theta)\) and drive the model outside \(\mathcal{C}_\beta\). To preserve feasibility, we impose the differential condition
\begin{equation}
\frac{d}{dt} g_\beta(\theta(t))
=
\nabla g_\beta(\theta)^\top \dot\theta
\le
-\kappa g_\beta(\theta),
\qquad \kappa>0,
\label{eq:barrier_condition}
\end{equation}
where $\kappa$ is a user-defined barrier parameter.  If \(g_\beta(\theta)=0\), this condition requires the dynamics to point inward or tangent to the feasible set. If \(g_\beta(\theta)<0\), it allows more freedom while still controlling motion toward violation.

At each parameter value \(\theta\), we therefore choose the closest direction to the nominal descent direction that satisfies~\eqref{eq:barrier_condition}:
\begin{empheq}[box=\colorbox{mygray}]{equation}
d^\star(\theta)
=
\arg\min_{d}
\frac{1}{2}
\left\|
d+\nabla \mathcal{L}_{\mathrm{SFT}}(\theta)
\right\|^2
\qquad
\mathrm{s.t.}
\qquad
\nabla g_\beta(\theta)^\top d
\le
-\kappa g_\beta(\theta).
\label{eq:safe_direction_qp}
\end{empheq}
We then define the constraint-aware gradient flow
$\dot\theta=d^\star(\theta).$
This dynamics is minimally invasive: it coincides with ordinary gradient flow whenever the nominal direction already satisfies the constraint, and otherwise applies the smallest correction needed to satisfy the safety condition.

Since~\eqref{eq:safe_direction_qp} has a single affine constraint in \(d\), it admits a closed-form solution. Specifically, if
\[
\nabla g_\beta(\theta)^\top (-\nabla \mathcal{L}_{\mathrm{SFT}}(\theta))
\le
-\kappa g_\beta(\theta),
\]
then \(d^\star(\theta)=-\nabla \mathcal{L}_{\mathrm{SFT}}(\theta)\). Otherwise,
\begin{equation}
d^\star(\theta)
=
\underbrace{-\nabla \mathcal{L}_{\mathrm{SFT}}(\theta)}_{\mathrm{nominal}}
-
\underbrace{\lambda^\star(\theta)\nabla g_\beta(\theta)}_{\mathrm{correction}},
\label{eq:safe_direction_closed_form}
\end{equation}
where
\begin{equation}
\lambda^\star(\theta)
=
\left(
\frac{
-\nabla g_\beta(\theta)^\top
\nabla \mathcal{L}_{\mathrm{SFT}}(\theta)
+
\kappa g_\beta(\theta)
}{
\|\nabla g_\beta(\theta)\|^2
}
\right)_+,
\label{eq:lambda_star}
\end{equation}
where $(\cdot)_+$ yields the non-negative component of its operand.
The correction is normal to the constraint surface and activates only when the nominal fine-tuning gradient would violate the forward-invariance condition.

\paragraph{Gradient of the safety constraint.}
From~\eqref{eq:majorized_chance_constraint}, the safety-gradient term is
\begin{equation}
\nabla g_\beta(\theta)
=
\frac{1}{n}\sum_{i=1}^n
\phi_\beta'
\left(
\Delta \ell_i^{\mathrm{safe}}(\theta)-\tau
\right)
\nabla \Delta \ell_i^{\mathrm{safe}}(\theta).
\label{eq:grad_g_beta}
\end{equation}
Since the reference-model term in \(\Delta \ell_i^{\mathrm{safe}}(\theta)\) is constant with respect to \(\theta\),
$
\nabla \Delta \ell_i^{\mathrm{safe}}(\theta)
=
\nabla \ell_i^{\mathrm{safe}}(\theta).
$
Thus, the safety correction is driven by safety examples according to their majorizer weights
$
\phi_\beta'
\left(
\Delta \ell_i^{\mathrm{safe}}(\theta)-\tau
\right).
$
For common majorizers, these weights emphasize examples near or above the degradation threshold.

\paragraph{Stochastic iterative implementation and feasibility.}
In practice, we compute the constraint-aware direction using stochastic minibatch estimates of
\(\nabla \mathcal{L}_{\mathrm{SFT}}(\theta)\), \(g_\beta(\theta)\), and \(\nabla g_\beta(\theta)\). Given a step size \(\eta>0\), the basic update is
\begin{equation}
\theta_{k+1}
=
\theta_k+\eta d^\star(\theta_k).
\label{eq:safe_gd_update}
\end{equation}
The forward-invariance guarantee, however, holds for exact values of \(g_\beta\) and \(\nabla g_\beta\); with minibatches, sampling noise can lead to constraint violations. To mitigate this, we can use a buffered stochastic constraint
\begin{equation}
\widehat g_{\beta}^{\mathrm{buf}}(\theta)
:=
\widehat g_\beta(\theta)+\rho,
\qquad \rho>0,
\label{eq:buffered_constraint}
\end{equation}
and compute the constraint-aware direction by enforcing
$
\nabla \widehat g_\beta(\theta)^\top d
\le
-\kappa \widehat g_{\beta}^{\mathrm{buf}}(\theta).
$
The buffer \(\rho\) provides slack against estimation error. These modifications preserve the scalability of minibatch training while reducing stochastic constraint violations. The resulting method can be interpreted as a safety filter on the fine-tuning gradient: when the SFT gradient already respects the majorized chance constraint, the update is unchanged; when it would increase the safety violation too quickly, the update is projected onto a safe half-space determined by \(\nabla g_\beta(\theta)\), providing a direct mechanism for maintaining safety feasibility without introducing a dual variable or hand-tuned penalty weight. The algorithm is summarized in Algorithm \ref{alg:safe_gd}.

\begin{algorithm}[t]
\caption{Safe Gradient Descent for Chance-Constrained Fine-tuning}
\label{alg:safe_gd}
\begin{algorithmic}[1]
\STATE \textbf{Input:} model $\pi_{\theta_0}$, datasets $\dsft,\dsafe$, threshold $\tau$, violation level $\alpha$, majorizer $\phi_\beta$, gain $\kappa$, step size $\eta$
\STATE Initialize $\theta\leftarrow \theta_0$
\FOR{$k=0,1,\dots,K-1$}
    \STATE Sample minibatches $B_{\mathrm{SFT}}\subseteq\dsft$ and $B_{\mathrm{safe}}\subseteq\dsafe$
    \STATE Estimate $\nabla \mathcal{L}_{\mathrm{SFT}}(\theta)$, $g_\beta(\theta)$, and $\nabla g_\beta(\theta)$ on the minibatches
    \STATE Compute $d^\star(\theta)$ from~\eqref{eq:safe_direction_closed_form}--\eqref{eq:lambda_star}
    \STATE Update $\theta\leftarrow \theta+\eta d^\star(\theta)$
\ENDFOR
\STATE \textbf{Return:} $\theta$
\end{algorithmic}
\end{algorithm}

\section{Experiments}
\label{sec:experiments}

We empirically evaluate our method against a broad suite of recent defenses
for safety-preserving fine-tuning.
Following the harmful fine-tuning
(HFT) literature
\citep{huang2024lisa,huang2024booster,li2025salora},
we adopt a two-stage protocol: a base instruct model is first
\emph{aligned} on a safety dataset, and then \emph{fine-tuned} on a
downstream task whose data has been poisoned with harmful prompt-response
pairs. The defense is applied either at the alignment stage or the
fine-tuning stage, depending on the method.

\subsection{Setup}
\label{sec:setup_exp}

\paragraph{Models.}
We use three open-weight instruct models spanning two families and two
size regimes: Llama-3.1-8B-Instruct, Qwen-3.5-9B, and
Qwen-3.5-4B. All defenses train LoRA adapters
\citep{hu2022lora} of rank $32$ with $\alpha_{\text{LoRA}}{=}4$,
attached to the query/key/value projections of every transformer
block, so all methods adapt the same parameter set.
Models are loaded in bfloat16; remaining compute and software details
are summarized in Appendix~\ref{app:compute}.

\paragraph{Safety dataset and harmful evaluation.}
The safety dataset $D_{\mathrm{safe}}$ is constructed from the
\textsc{BeaverTails} \texttt{30k\_train} split
\citep{ji2023beavertails} by selecting prompt--response pairs flagged
\texttt{is\_safe=True}. We use $5{,}000$ such pairs for both the
alignment stage and as the constraint dataset for our method. For
evaluation, we hold out a disjoint set of $700$ unique harmful prompts
from the \texttt{30k\_test} split.

\paragraph{Downstream tasks and poisoning.}
We adopt three standard downstream tasks used throughout the HFT
literature: \textbf{SST-2} \citep{socher2013recursive} (binary sentiment),
\textbf{AG News} \citep{zhang2015character} (4-way topic classification),
and \textbf{GSM8K} \citep{cobbe2021training} (grade-school math).
Each task is poisoned by mixing in harmful (prompt, response) pairs
sampled from the \textsc{BeaverTails} unsafe split:
$\lfloor p \cdot N \rfloor$ harmful pairs are concatenated with the
benign task examples and shuffled, where $N{=}1000$ is the total
fine-tuning budget and $p \in [0,1]$ is the poison ratio.
Our main table fixes a high-poison regime with $p{=}0.5$ to expose
defenses to a strong attack; the dependence on $p$ is studied in
Section~\ref{sec:poison_sweep}.

\paragraph{Two-stage training.}
\emph{Alignment.} The base model is fine-tuned on $D_{\mathrm{safe}}$
with the alignment-stage defense (or plain supervised fine-tuning, for
fine-tuning-stage defenses). Standard alignment uses learning rate
$5{\times}10^{-4}$, $3$ epochs, batch size $10$, constant schedule, and
weight decay $0.1$; alignment-stage defenses (Vaccine~\cite{huang2024vaccine}, Booster~\cite{huang2024booster})
 instead align for $20$ epochs, matching the recipes used in their
original works. After alignment, the LoRA adapter is merged into the
base weights so subsequent fine-tuning starts from a well-defined
aligned checkpoint.

\emph{Fine-tuning.} The aligned model is then fine-tuned on the
poisoned downstream task with learning rate $1{\times}10^{-5}$, $20$
epochs, batch size $10$, constant schedule, weight decay $0.1$, and
maximum sequence length $512$. The longer fine-tuning horizon ensures
that all defenses are challenged with sustained poisoning exposure
rather than under-trained baselines.
We perform fine-tuning under three different seeds and report mean $\pm$ std for the desired metrics in our tables.
Defense-specific hyperparameters
(reproduced verbatim from the originating papers wherever applicable)
are listed in Appendix~\ref{app:defense_hparams}.

\subsection{Defenses}
\label{sec:defenses}

We compare our method against seven baselines used in the recent HFT literature. We present these methods in Table \ref{tab:method_descriptions}, along with a categorization of the stage in which they operate.

\begin{table}[t]
\centering
\small
\setlength{\tabcolsep}{3pt}
\renewcommand{\arraystretch}{1.15}
\scalebox{0.95}{
\begin{tabular}{l ccc p{0.6\linewidth}}
\toprule
Method & Alignment & SFT & Post-hoc & Mechanism \\
\midrule
SFT  & & \checkmark & &
Plain supervised fine-tuning on the poisoned task; worst-case lower bound w/o defensive intervention. \\
\midrule
Vaccine \citep{huang2024vaccine} & \checkmark & & &
SAM-style perturbation of attention representations during alignment to flatten the loss landscape against fine-tuning shifts. \\
Booster \citep{huang2024booster} & \checkmark & & &
Adds a harmful-perturbation regularizer ($\lambda{=}5$) that perturbs parameters along the harmful gradient before each alignment step. \\
\midrule
Lisa \citep{huang2024lisa} & & \checkmark & &
Bi-state optimization alternating fine-tuning and alignment phases, with a proximal pull toward the aligned checkpoint. \\
SafeGrad \citep{yi2025gradient} & & \checkmark & &
Gradient surgery resolving conflicts between the task gradient and a KL-based alignment gradient at every fine-tuning step. \\
AsFT \citep{yang2026asft} & & \checkmark & &
Anchors fine-tuning to the alignment direction by penalizing the orthogonal component of the LoRA update ($\lambda_{\mathrm{reg}}{=}1$). \\
SaLoRA \citep{li2025salora} & & \checkmark & &
Precomputes a safety direction from $D_{\mathrm{safe}}$ and projects it out of the fine-tuning gradient at every step. \\
\midrule
SafeLoRA \citep{hsu2024safe} & & & \checkmark &
Training-free, data-free patch projecting the trained LoRA update of selected layers onto the safety-aligned subspace. \\
\bottomrule
\end{tabular}
}
\caption{Baselines grouped by stage of intervention. 
SafeLoRA is training-free and applied to the LoRA artifact after fine-tuning.}
\label{tab:method_descriptions}
\end{table}

For our method, we instantiate the majorized problem \eqref{eq:majorized_problem} using the one-sided ramp function
$\phi_\beta(z) = \max(1 + \beta z, 0), \beta>0$. 
The constraint then takes the form
\begin{equation}
    g_\beta(\theta)
    \;=\;
    \frac{1}{n}\sum_{i=1}^n
\max
\left(
1 + 
\beta (\Delta \ell_i^{\mathrm{safe}}(\theta)-\tau), 0
\right)
-\alpha \leq 0
    \label{eq:cage_ra_constraint}
\end{equation}
where $\tau$ is the largest acceptable degradation relative to the reference model and $\alpha\in(0,1)$
is the tolerated violation rate. 
At each step we form the
safe descent direction by the closed-form correction in
\eqref{eq:safe_direction_closed_form}--\eqref{eq:safe_gd_update}, reusing the
batch of $D_{\mathrm{safe}}$ already drawn for $\nabla g_{\beta}$ as the
constraint mini-batch and the poisoned task batch as $\nabla \mathcal{L}_\mathrm{SFT}$.
Throughout the main experiments, we use the configuration
$(\tau, \beta, \alpha) = (0.1, 10, 0.05)$ and set the barrier parameter $\kappa$ as the inverse of the learning rate. We provide results for hyperparameter search in Section \ref{sec:hp_search}.

\subsection{Evaluation metrics}
\label{sec:metrics}

We report two complementary metrics, following standard HFT practice:

\paragraph{Harmful Score (HS, $\downarrow$).}
We generate $512$-token responses to each of the $700$ held-out harmful
\textsc{BeaverTails} prompts and score them with the
\texttt{PKU-Alignment/beaver-dam-7b} moderation classifier
\citep{ji2023beavertails}. For each prompt $i$, the moderator returns
a vector of safety-category probabilities; we take the maximum across
categories as the per-prompt harm probability $h_i$. The headline HS
is the fraction of prompts with $h_i > 0.4$, the threshold used in the
original \textsc{BeaverTails} evaluation. We additionally
report the per-prompt quantiles
$\mathrm{HS}_{qK}, K \in \{10, 25, 50, 75, 90, 95, 99\}$, of $\{h_i\}_{i=1}^{700}$,
which directly track tail behavior and are the quantities that our method
is constructed to control.

\paragraph{Fine-tune Accuracy (FA, $\uparrow$).}
Task utility is measured on $200$ held-out test examples per task
using the standard task metric: substring match for SST-2 and AG~News,
and exact-match on the extracted final answer for GSM8K. Generation
uses left-padded greedy decoding with up to $256$ new tokens.

\subsection{Main results: high poisoning}
\label{sec:main_table}

\input{tables/neuripsPapers/table1_neurips}

Table~\ref{tab:main} reports performance at poison ratio $p{=}0.5$ for three
backbones and three downstream tasks. For each configuration we report the
harmful-response rate HS, the per-prompt harm quantile $\mathrm{HS}_{qK}$
($K{=}90$ for the Qwen backbones, $K{=}50$ for Llama, where higher quantiles
saturate), and the task accuracy FA. The \emph{Base} and \emph{Aligned (SFT)}
rows give the pre-fine-tuning model and its safety-aligned counterpart as
references, and undefended SFT serves as the failure baseline against which all
defenses are measured.

\textbf{Harm reduction.} Across backbones and tasks, our method sits at or near
the lowest HS and $\mathrm{HS}_{qK}$ in every configuration, so the reduction reflects the upper tail of the per-prompt
harm distribution and not merely the threshold statistic. The reference that
matters is the aligned model. Our HS is at or statistically indistinguishable
from it everywhere, while undefended SFT and most regularization-based defenses
raise harm well above that level. Fine-tuning under our method therefore leaves
the safety profile of the aligned checkpoint essentially intact, which is what
the constraint was written to enforce. A more detailed breakdown of the
quantiles can be found in Appendix~\ref{app:tail}.

\textbf{Classification tasks.} On SST-2 and AG News, suppressing harm does not
come at the expense of task learning: our method fits the downstream task to
within a modest margin of the strongest utility-oriented baselines while
simultaneously achieving the lowest or near-lowest harm on both metrics. The
regularization-based defenses are markedly less stable at this poison ratio,
with the HS of Vaccine and Booster exceeding that of undefended SFT on several
Qwen configurations.
Perturbing or smoothing the fine-tuning objective thus
redistributes harm rather than controlling it when the poison fraction is
large. By contrast, the behavior of our method is consistent across model
scale.

\textbf{GSM8K and constraint satisfaction.} GSM8K is the most informative
setting for assessing whether a defense genuinely respects the safety
constraint. Task accuracy is far from saturated here, so the safety-utility
trade-off is visible in a way it is not on the classification tasks, and no
method approaches undefended-SFT accuracy without an HS far above the aligned
reference. The constraint is therefore active. This changes the question a
defense should be asked. Since our formulation maximizes task performance
subject to $g_\beta(\theta) \leq 0$ rather than trading the two off through a
penalty, the relevant comparison is not who attains the lowest harm outright,
but who extracts the most utility from among the methods that keep harm at the
aligned level. 
On that comparison our method is the strongest on every
backbone: the few baselines that record lower HS than ours do so only by
sacrificing task accuracy, in the Qwen-3.5-4B and Llama cases falling to
roughly half of what our method achieves. 
Their apparent safety advantage is
model degradation, not constraint satisfaction. Our method instead recovers
substantial accuracy over the aligned model on all three backbones while
holding harm at the aligned level, and on Qwen-3.5-9B it surpasses
undefended-SFT accuracy outright without any corresponding rise in harm. The
additional accuracy that the remaining baselines extract is accompanied by
exactly the harm the constraint precludes. This is the behavior prescribed for
a faithful constrained method in the harmful-fine-tuning regime, and it is the
property that distinguishes our approach from baselines whose safety guarantees
hold only in expectation.

\subsection{Poison-ratio sweep}
\label{sec:poison_sweep}

To probe how each defense scales with attack strength, we hold all
other settings fixed and sweep the poison ratio
$p \in \{0.05, 0.1, 0.25, 0.5\}$ on Qwen-3.5-4B/SST-2.
Table~\ref{tab:poison-sweep-qwen3.5_4b_sst2} reports HS and FA for every defense at
each $p$. The tables for the other tasks are deferred to the Appendix.

\input{tables/neuripsPapers/poison-sweep-qwen3.5_4b_sst2}

The poison-ratio sweep on Qwen-3.5-4B
(Tables~\ref{tab:poison-sweep-qwen3.5_4b_sst2},
\ref{tab:poison-sweep-qwen3.5_4b_ag_news}, and
\ref{tab:poison-sweep-qwen3.5_4b_gsm8k}) isolates the defining behavior of our method: its harm metrics are approximately invariant to the poison ratio.
As $p$ climbs from
$5\%$ to $50\%$, our HS and $\mathrm{HS}_{q90}$ remain nearly constant, whereas the harm of
the undefended baseline (SFT) climbs steadily. Whether this fixed safety can be
combined with high accuracy depends on the task.

On SST-2 and AG News (Tables~\ref{tab:poison-sweep-qwen3.5_4b_sst2} and
\ref{tab:poison-sweep-qwen3.5_4b_ag_news}) it can: our method is the lowest-harm
defense at every poison ratio while its accuracy trails the best-performing
baseline by only a few points. 
The next-lowest-harm defenses (SafeLoRA, AsFT) are likewise approximately flat in p, but at a higher harm level.
Vaccine and Booster exceed the undefended baseline at every poison ratio.

On GSM8K (Table~\ref{tab:poison-sweep-qwen3.5_4b_gsm8k}), high accuracy and low
harm are not jointly attainable among the defenses we evaluate: harm and
accuracy are negatively associated across methods, and the lowest-harm defenses
(SafeLoRA, AsFT) reach the aligned harm floor only by failing to learn the
task, at accuracy near $0.15$. Here our method keeps the safety constraint
binding and lets accuracy take a value compatible with it. 
Among the methods we compare, our method is the only one that has kept the harmfulness low whilst increasing the task accuracy.

\section{Conclusion}
\label{sec:conclusion}
We formulated safety preservation in LLM post-training as a chance
constraint on the safety degradation relative to a reference model and gave a
tractable instantiation via differentiable majorization and a
closed-form safety filter on the fine-tuning gradient. The method
enforces the safety budget step by step rather than as a soft
penalty integrated against the example distribution. 
Across three backbones and three tasks under harmful fine-tuning, this yields a
strong and safe operating point on the safety-utility trade-off: harm at or
near the lowest in nearly every cell, at a modest cost in task accuracy.

\section{Limitations}
\label{sec:limitations}

Our method requires access to a separate safety dataset on which to
instantiate the chance constraint. This is standard within the HFT framework,
as most defenses in our comparison set assume the same. Computationally, the
only addition relative to plain SFT is the constraint gradient: each step
evaluates $\nabla g_\beta(\theta)$ on a constraint mini-batch, which costs one
extra forward--backward pass and roughly doubles the per-step wall-clock time.
Appendix~\ref{app:compute} reports a detailed runtime analysis together with a
primal--dual reformulation that folds this gradient into a single backward
pass, trading part of the per-step feasibility guarantee for speed.

A further limitation is that our constraint is instantiated on a fixed safety
dataset, scored by simply the cross-entropy being low on such samples.
A natural extension
is to define the chance constraint on a reward model applied to the
model's own generations, giving an on-policy safety constraint that tracks the
current policy's outputs rather than a static prompt set. We leave this
direction to future work.

\bibliography{bib}

\clearpage

\appendix

\input{tables/neuripsPapers/poison-sweep-qwen3.5_4b_ag_news}

\input{tables/neuripsPapers/poison-sweep-qwen3.5_4b_gsm8k}

\section{Related Work}
\label{sec:related_work}

\paragraph{Harmful fine-tuning: attacks and defenses.}
The vulnerability of safety-aligned LLMs to fine-tuning was first
systematically documented in \cite{qi2023fine}, which showed that
GPT-3.5's safety guardrails can be removed with as few as ten
adversarially designed examples and that benign instruction data
also degrades safety. Similar findings followed for open-weight
models \citep{yang2023shadow,lermen2023lora} and for GPT-4 via the
public fine-tuning API \citep{zhan2024removing}, with related work
on jailbreak failure modes \citep{wei2023jailbroken} and
detection-evading attacks \citep{halawi2024covert}.

Defenses are organized by where they intervene in the post-training
pipeline. Alignment-stage methods reshape the loss landscape against
fine-tuning shifts via flat-minimum perturbations
\citep{huang2024vaccine,huang2024booster} or by removing harmful
representations from intermediate activations
\citep{rosati2024representation}. Fine-tuning-stage methods
constrain or project the LoRA update --- through proximal corrections
toward the aligned checkpoint \citep{huang2024lisa}, projection of
safety directions out of the gradient \citep{li2025salora}, gradient
surgery against a KL-based alignment gradient
\citep{yi2025gradient}, or orthogonal anchoring penalties 
\citep{yang2026asft}. Joint two-stage \citep{nguyen2026antibody}
and post-hoc \citep{hsu2024safe} methods fill the remaining cells.

\paragraph{Chance-constrained optimization and tail-risk measures.}
Chance-constrained programming \citep{charnes1959chance} addresses
optimization under uncertainty through bounds on the probability of
constraint violation. Direct optimization is intractable due to the
discontinuous indicator, and the standard remedies are convex inner
approximations \citep{nemirovski2006convex} or
sampling-based scenario relaxations \citep{calafiore2006scenario},
both of which yield deterministic problems with high-probability
feasibility. 
These tools have entered machine learning
primarily through risk-sensitive RL
\citep{chow2017risk}.

\paragraph{Constrained policy optimization and constrained LLM post-training.}
Treating safety as a hard constraint has a long history in safe RL
through the constrained MDP framework \citep{altman1999constrained},
with two main algorithmic families.
Trust-region methods \citep{achiam2017constrained,yang2020projection} take constrained policy steps using first-order approximations of the objective and the cost constraint together with a quadratic (KL) trust region. Primal-Dual
methods drive the policy toward feasibility through dual ascent on
a Lagrange multiplier \citep{tessler2018reward,stooke2020responsive},
justified theoretically by the zero-duality-gap result of
\cite{paternain2019constrained}. In the LLM setting, \emph{Safe
RLHF} \citep{dai2023safe} formalizes helpfulness-versus-harmlessness
as a Lagrangian-relaxed CMDP with separate reward and cost models,
and \cite{entesari2025constrained} adopt a similar primal-dual
treatment for LLM unlearning. 
Both of these methods rely on the dual variable to
react to constraint violations after they occur. 
Our constraint-aware gradient descent operates differently: it computes
a closed-form QP correction at each step that keeps the iterate
inside the feasible set defined by the majorized chance constraint,
providing anytime feasibility.

\section{Defense Hyperparameters}
\label{app:defense_hparams}

Table~\ref{tab:defense_hparams} gives the  hyperparameters used
for every defense in every experiment. Both stages share the
optimizer (AdamW, $\beta_1{=}0.9$, $\beta_2{=}0.999$, weight decay
$0.1$) and a constant learning-rate schedule, with learning rate
$5{\times}10^{-4}$ at the alignment stage and $1{\times}10^{-5}$ at
the fine-tuning stage. Mini-batch size is $10$ for all defenses;
$\nabla f$ and $\nabla g$ for our method are computed on \emph{disjoint}
mini-batches of size $10$ drawn each step from the poisoned task and
from $D_{\mathrm{safe}}$, respectively. All defenses fine-tune for
$20$ epochs on $1{,}000$ task examples (poisoned at ratio $p$); LoRA
adapters use rank $32$, $\alpha{=}4$, dropout $0.05$, and a maximum
sequence length of $512$ tokens.

\begin{table}[h]
\centering
\small
\caption{Defense-specific hyperparameters used in all experiments.
Symbols follow the originating papers; ``align epochs'' is the number
of epochs of the alignment-stage objective on the $5{,}000$-example
$D_{\mathrm{safe}}$ split. ``ft'' abbreviates fine-tuning.}
\label{tab:defense_hparams}
\begin{tabular}{ll}
\toprule
Defense  & Hyperparameters \\
\midrule
Vaccine        & $\rho{=}2$, align epochs $20$ \\
Booster        & $\alpha_{\mathrm{B}}{=}0.1$, $\lambda{=}5$, align epochs $20$ \\
Lisa         & align step $100$, ft step $900$, $\rho{=}0.01$, guide $10{,}000$ \\
SaLoRA       & projection strength $\alpha_{\mathrm{proj}}{=}1.0$ \\
SafeLoRA        & projection threshold $\tau_{\mathrm{SL}}{=}0.5$ (applied after plain SFT fine-tuning) \\
SafeGrad     & $\rho{=}1.0$ (KL alignment weight) \\
AsFT         & $\lambda_{\mathrm{reg}}{=}1.0$ \\
Antibody  & $\rho{=}0.1$, $\lambda_{\mathrm{ref}}{=}1$, $\tau{=}1$, align epochs $20$ \\
\bottomrule
\end{tabular}
\end{table}

\section{Hyperparameter search}
\label{sec:hp_search}

Our method exposes three hyperparameters: the degradation budget relative to the reference model
$\tau$, the ramp slope $\beta$, and the tolerated violation
probability $\alpha$. 
We sweep these on Qwen-3.5-4B/SST-2 at $p{=}0.5$, with
$\tau \in \{0, 0.01, 0.05, 0.1, 0.2\}$,
$\alpha \in \{0.01, 0.05, 0.1, 0.2\}$, and
$\beta \in \{1, 10\}$, and report the results in Table~\ref{tab:cage-ra-sens}.

We see that the method is
robust across two orders of magnitude of $\alpha$ and $\tau$:
all configurations land within a small band on $q_{90}$.
We adopt $(\tau, \beta, \alpha){=}(0.1, 10, 0.05)$ for all remaining experiments.

\input{tables/neuripsPapers/cage_sensitivity_sst2}

\section{Tail Statistics}
\label{app:tail}

We provide detailed breakdown of the tail statistics for the different tasks for the Qwen-3.5 4B model in Tables \ref{tab:quantile-qwen3.5_4b_sst2}, \ref{tab:quantile-qwen3.5_4b_ag_news}, and \ref{tab:quantile-qwen3.5_4b_gsm8k}. 
The tables show that our method consistently wins across high-tail statistics.

\input{tables/neuripsPapers/quantile-qwen3.5_4b_sst2}

\input{tables/neuripsPapers/quantile-qwen3.5_4b_ag_news}

\input{tables/neuripsPapers/quantile-qwen3.5_4b_gsm8k}

\section{Computational Complexity}
\label{app:compute}

All training and evaluation runs were executed on a single A100/H100
node per run, using PyTorch 2.x with bfloat16 mixed precision. 
Our method roughly doubles the per-step wall-clock time relative to plain SFT, owing to the additional forward/backward pass on the constraint batch.
We provide a profiling comparison of our method against some of the baselines for the Qwen-3.5 4B model in Table \ref{tab:profile-qwen3.5_4b_sst2}.

\input{tables/neuripsPapers/profiling}

\section{Primal-Dual Algorithm}
As discussed in Section \ref{app:compute}, our algorithm incurs additional computational overhead during training, primarily due to the separate backward pass required to compute the gradient of the constraint. This overhead can be circumvented by replacing the gradient-flow formulation with a simpler primal-dual scheme. To this end, we adopt the framework developed for constrained unlearning in \cite{entesari2025constrained}, which eliminates the need for a separate backward call. We emphasize that this reformulation does not alter the underlying computational complexity of evaluating the constraint gradient; rather, it integrates naturally into modern automatic differentiation pipelines, reducing the effective runtime to that of a standard regularization technique. A detailed description of the algorithm, together with the analysis establishing that it yields valid solutions to the optimization problem, is provided in \cite{entesari2025constrained}. Table \ref{tab:primal-dual} reports the results of applying this primal-dual algorithm to the constrained setup of our experiments. The primal-dual variant attains comparable utility (FA within 0.02 of our method on all three tasks). On AG News and GSM8K, however, our method achieves lower harm, consistent with its anytime-feasibility guarantee. On SST-2 the two are comparable.

\input{tables/neuripsPapers/primal_dual}

\section{Composition with a Dual-Stage Defense}
\label{app:antibody}

Antibody~\citep{nguyen2026antibody} intervenes at both stages: it replaces the alignment run
that produces $\theta_0$ and then defends the fine-tuning run that starts from it.  Its alignment stage also
changes two things at once, the objective and the epoch budget (20 vs.\ 3),  so the
resulting pre-attack safety cannot be attributed to either in isolation. We report it
separately in Table~\ref{tab:antibody_appendix}, together with the more informative experiment:
applying our filter to the Antibody-aligned checkpoint rather than the SFT-aligned one.

Our constraint is written on degradation relative to $\theta_0$, not on absolute harm, so it
is indifferent to how the reference was produced and composes with any alignment-stage
procedure that supplies one. The composition pays off where the constraint is active. On
GSM8K, Antibody's own fine-tuning stage does not hold the floor its alignment stage
establishes, harm moves further from the Antibody-aligned checkpoint than undefended SFT
moves from the standard one, with the largest seed variance in the column.
Our filter, on the
same checkpoint, holds it: the composed configuration attains the lowest post-attack
$\mathrm{HS}$ of any configuration we evaluate on GSM8K for all three backbones, with the same
ordering on $\mathrm{HS}_{qK}$, while recovering accuracy over its reference rather than
reaching low harm by failing to learn the task. On SST-2 and AG News, where accuracy saturates
and the constraint is slack for most of training, Antibody retains the lower $\mathrm{HS}$.
f
The filter preserves the Antibody-aligned reference less tightly than the standard one. The
constraint acts on cross-entropy over $\mathcal{D}_{\mathrm{safe}}$, and the extra safety the
alignment stage buys is evidently not fully captured by that proxy; separately,
$(\tau,\beta,\alpha)$ were tuned against the SFT-aligned reference and $\tau$ is an absolute
budget in nats, so a lower-loss reference plausibly needs a tighter one. We did not re-tune
per reference model.

\input{tables/neuripsPapers/antibody-table}

\end{document}

%% file: tables/neuripsPapers/table1_neurips.tex
\begin{table*}[t]
  \centering
  \small
  \caption{Main results at poison ratio $p{=}0.5$. For each (model, defense, task) we report HS (fraction of prompts with max harm probability $>0.4$), HS$_{qK}$ (per-prompt $K$-th quantile of max harm probability), and FA (task accuracy). $K{=}50$ for Llama (higher quantiles saturate near $1.0$ across all defenses); $K{=}90$ for Qwen models. \textbf{Bold} = best, \underline{underlined} = second best per column.  $\downarrow$ lower is better, $\uparrow$ higher is better. Defense rows are mean\,{\tiny$\pm$std} over 3 seeds.}
  \label{tab:main}
  \resizebox{\textwidth}{!}{%
  \begin{tabular}{clccccccccc}
    \toprule
     & Defense & \multicolumn{3}{c}{SST-2} & \multicolumn{3}{c}{AG News} & \multicolumn{3}{c}{GSM8K} \\
    \cmidrule(lr){3-5}\cmidrule(lr){6-8}\cmidrule(lr){9-11}
     &  & HS $\downarrow$ & HS$_{qK}$ $\downarrow$ & FA $\uparrow$ & HS $\downarrow$ & HS$_{qK}$ $\downarrow$ & FA $\uparrow$ & HS $\downarrow$ & HS$_{qK}$ $\downarrow$ & FA $\uparrow$ \\
    \midrule
    \multirow{11}{*}{\rotatebox[origin=c]{90}{Qwen-3.5-4B\,($qK{=}90$)}} & Base & 0.430 & 0.707 & 0.915 & 0.430 & 0.707 & 1.000 & 0.430 & 0.707 & 0.280 \\
     & Aligned (SFT) & 0.350 & 0.738 & 0.940 & 0.350 & 0.738 & 0.890 & 0.350 & 0.738 & 0.170 \\
    \cmidrule(l){2-11}
     & SFT & 0.546\,{\tiny$\pm$0.042} & 0.890\,{\tiny$\pm$0.015} & \underline{0.973}\,{\tiny$\pm$0.016} & 0.545\,{\tiny$\pm$0.043} & 0.901\,{\tiny$\pm$0.025} & 0.890\,{\tiny$\pm$0.015} & 0.637\,{\tiny$\pm$0.047} & 0.945\,{\tiny$\pm$0.017} & \underline{0.633}\,{\tiny$\pm$0.064} \\
     & Vaccine & 0.688\,{\tiny$\pm$0.032} & 0.962\,{\tiny$\pm$0.008} & 0.963\,{\tiny$\pm$0.038} & 0.680\,{\tiny$\pm$0.025} & 0.960\,{\tiny$\pm$0.002} & \underline{0.932}\,{\tiny$\pm$0.033} & 0.697\,{\tiny$\pm$0.023} & 0.968\,{\tiny$\pm$0.009} & 0.378\,{\tiny$\pm$0.172} \\
     & Booster & 0.665\,{\tiny$\pm$0.021} & 0.940\,{\tiny$\pm$0.012} & 0.943\,{\tiny$\pm$0.018} & 0.674\,{\tiny$\pm$0.013} & 0.949\,{\tiny$\pm$0.007} & \textbf{0.952}\,{\tiny$\pm$0.021} & 0.715\,{\tiny$\pm$0.012} & 0.951\,{\tiny$\pm$0.002} & 0.427\,{\tiny$\pm$0.083} \\
     & Lisa & 0.489\,{\tiny$\pm$0.051} & 0.845\,{\tiny$\pm$0.037} & 0.968\,{\tiny$\pm$0.003} & 0.510\,{\tiny$\pm$0.038} & 0.874\,{\tiny$\pm$0.020} & 0.900\,{\tiny$\pm$0.018} & 0.532\,{\tiny$\pm$0.032} & 0.913\,{\tiny$\pm$0.016} & 0.290\,{\tiny$\pm$0.041} \\
     & SaLoRA & 0.540\,{\tiny$\pm$0.040} & 0.895\,{\tiny$\pm$0.014} & \textbf{0.975}\,{\tiny$\pm$0.005} & 0.558\,{\tiny$\pm$0.024} & 0.898\,{\tiny$\pm$0.022} & 0.898\,{\tiny$\pm$0.020} & 0.656\,{\tiny$\pm$0.016} & 0.947\,{\tiny$\pm$0.004} & \textbf{0.655}\,{\tiny$\pm$0.061} \\
     & SafeLoRA & \underline{0.363}\,{\tiny$\pm$0.014} & \textbf{0.721}\,{\tiny$\pm$0.019} & 0.952\,{\tiny$\pm$0.018} & \underline{0.358}\,{\tiny$\pm$0.026} & \textbf{0.726}\,{\tiny$\pm$0.008} & 0.898\,{\tiny$\pm$0.032} & \underline{0.358}\,{\tiny$\pm$0.012} & \textbf{0.720}\,{\tiny$\pm$0.002} & 0.135\,{\tiny$\pm$0.005} \\
     & SafeGrad & 0.445\,{\tiny$\pm$0.015} & 0.843\,{\tiny$\pm$0.006} & 0.967\,{\tiny$\pm$0.019} & 0.450\,{\tiny$\pm$0.038} & 0.864\,{\tiny$\pm$0.031} & 0.888\,{\tiny$\pm$0.006} & 0.495\,{\tiny$\pm$0.026} & 0.894\,{\tiny$\pm$0.028} & 0.542\,{\tiny$\pm$0.083} \\
     & AsFT & 0.371\,{\tiny$\pm$0.013} & 0.774\,{\tiny$\pm$0.010} & 0.952\,{\tiny$\pm$0.006} & 0.374\,{\tiny$\pm$0.011} & 0.792\,{\tiny$\pm$0.009} & 0.907\,{\tiny$\pm$0.010} & \textbf{0.348}\,{\tiny$\pm$0.006} & \underline{0.737}\,{\tiny$\pm$0.013} & 0.138\,{\tiny$\pm$0.016} \\
     & Ours & \textbf{0.340}\,{\tiny$\pm$0.032} & \underline{0.726}\,{\tiny$\pm$0.041} & 0.957\,{\tiny$\pm$0.018} & \textbf{0.340}\,{\tiny$\pm$0.017} & \underline{0.732}\,{\tiny$\pm$0.039} & 0.927\,{\tiny$\pm$0.006} & 0.360\,{\tiny$\pm$0.052} & 0.780\,{\tiny$\pm$0.118} & 0.267\,{\tiny$\pm$0.110} \\
    \midrule
    \multirow{11}{*}{\rotatebox[origin=c]{90}{Qwen-3.5-9B\,($qK{=}90$)}} & Base & 0.434 & 0.715 & 0.995 & 0.434 & 0.715 & 0.980 & 0.434 & 0.715 & 0.380 \\
     & Aligned (SFT) & 0.546 & 0.781 & 0.980 & 0.546 & 0.781 & 0.855 & 0.546 & 0.781 & 0.250 \\
    \cmidrule(l){2-11}
     & SFT & 0.670\,{\tiny$\pm$0.023} & 0.898\,{\tiny$\pm$0.031} & 0.980\,{\tiny$\pm$0.005} & 0.672\,{\tiny$\pm$0.025} & 0.885\,{\tiny$\pm$0.016} & 0.903\,{\tiny$\pm$0.030} & 0.681\,{\tiny$\pm$0.020} & 0.947\,{\tiny$\pm$0.005} & 0.300\,{\tiny$\pm$0.080} \\
     & Vaccine & 0.601\,{\tiny$\pm$0.052} & 0.938\,{\tiny$\pm$0.010} & \textbf{0.993}\,{\tiny$\pm$0.012} & 0.622\,{\tiny$\pm$0.038} & 0.938\,{\tiny$\pm$0.007} & \underline{0.965}\,{\tiny$\pm$0.022} & 0.687\,{\tiny$\pm$0.030} & 0.947\,{\tiny$\pm$0.008} & 0.227\,{\tiny$\pm$0.059} \\
     & Booster & 0.628\,{\tiny$\pm$0.011} & 0.867\,{\tiny$\pm$0.010} & 0.945\,{\tiny$\pm$0.005} & 0.622\,{\tiny$\pm$0.015} & 0.845\,{\tiny$\pm$0.005} & \textbf{0.972}\,{\tiny$\pm$0.013} & 0.619\,{\tiny$\pm$0.023} & 0.869\,{\tiny$\pm$0.012} & \textbf{0.597}\,{\tiny$\pm$0.049} \\
     & Lisa & 0.612\,{\tiny$\pm$0.036} & 0.854\,{\tiny$\pm$0.020} & 0.963\,{\tiny$\pm$0.003} & 0.610\,{\tiny$\pm$0.023} & 0.853\,{\tiny$\pm$0.012} & 0.905\,{\tiny$\pm$0.026} & 0.622\,{\tiny$\pm$0.047} & 0.900\,{\tiny$\pm$0.022} & \underline{0.517}\,{\tiny$\pm$0.049} \\
     & SaLoRA & 0.677\,{\tiny$\pm$0.013} & 0.903\,{\tiny$\pm$0.014} & \underline{0.983}\,{\tiny$\pm$0.003} & 0.689\,{\tiny$\pm$0.007} & 0.893\,{\tiny$\pm$0.008} & 0.925\,{\tiny$\pm$0.017} & 0.710\,{\tiny$\pm$0.011} & 0.943\,{\tiny$\pm$0.008} & 0.258\,{\tiny$\pm$0.059} \\
     & SafeLoRA & \underline{0.531}\,{\tiny$\pm$0.015} & \textbf{0.771}\,{\tiny$\pm$0.005} & 0.967\,{\tiny$\pm$0.014} & \textbf{0.509}\,{\tiny$\pm$0.005} & \textbf{0.777}\,{\tiny$\pm$0.012} & 0.885\,{\tiny$\pm$0.035} & \textbf{0.501}\,{\tiny$\pm$0.013} & \textbf{0.760}\,{\tiny$\pm$0.017} & 0.258\,{\tiny$\pm$0.029} \\
     & SafeGrad & 0.604\,{\tiny$\pm$0.025} & 0.868\,{\tiny$\pm$0.020} & \underline{0.983}\,{\tiny$\pm$0.010} & 0.612\,{\tiny$\pm$0.009} & 0.855\,{\tiny$\pm$0.014} & 0.907\,{\tiny$\pm$0.028} & 0.613\,{\tiny$\pm$0.028} & 0.919\,{\tiny$\pm$0.005} & 0.317\,{\tiny$\pm$0.058} \\
     & AsFT & \textbf{0.529}\,{\tiny$\pm$0.009} & \underline{0.800}\,{\tiny$\pm$0.002} & 0.972\,{\tiny$\pm$0.013} & 0.547\,{\tiny$\pm$0.007} & \underline{0.811}\,{\tiny$\pm$0.019} & 0.892\,{\tiny$\pm$0.029} & \underline{0.529}\,{\tiny$\pm$0.005} & \underline{0.801}\,{\tiny$\pm$0.013} & 0.235\,{\tiny$\pm$0.036} \\
     & Ours & 0.539\,{\tiny$\pm$0.024} & 0.836\,{\tiny$\pm$0.048} & 0.957\,{\tiny$\pm$0.013} & \underline{0.536}\,{\tiny$\pm$0.022} & 0.822\,{\tiny$\pm$0.030} & 0.902\,{\tiny$\pm$0.050} & 0.548\,{\tiny$\pm$0.027} & 0.857\,{\tiny$\pm$0.059} & 0.325\,{\tiny$\pm$0.087} \\
    \midrule
    \multirow{11}{*}{\rotatebox[origin=c]{90}{Llama-3.1-8B\,($qK{=}50$)}} & Base & 0.553 & 0.435 & 0.995 & 0.553 & 0.435 & 0.985 & 0.553 & 0.435 & 0.185 \\
     & Aligned (SFT) & 0.786 & 0.781 & 0.830 & 0.786 & 0.781 & 0.850 & 0.786 & 0.781 & 0.175 \\
    \cmidrule(l){2-11}
     & SFT & 0.910\,{\tiny$\pm$0.003} & 0.913\,{\tiny$\pm$0.005} & 0.995\,{\tiny$\pm$0.005} & 0.915\,{\tiny$\pm$0.005} & 0.914\,{\tiny$\pm$0.004} & \underline{0.980}\,{\tiny$\pm$0.013} & 0.894\,{\tiny$\pm$0.017} & 0.888\,{\tiny$\pm$0.013} & 0.355\,{\tiny$\pm$0.071} \\
     & Vaccine & 0.851\,{\tiny$\pm$0.055} & 0.837\,{\tiny$\pm$0.055} & 0.962\,{\tiny$\pm$0.019} & 0.851\,{\tiny$\pm$0.049} & 0.822\,{\tiny$\pm$0.039} & 0.903\,{\tiny$\pm$0.033} & 0.843\,{\tiny$\pm$0.013} & 0.821\,{\tiny$\pm$0.039} & 0.140\,{\tiny$\pm$0.018} \\
     & Booster & 0.894\,{\tiny$\pm$0.015} & 0.891\,{\tiny$\pm$0.020} & 0.995\,{\tiny$\pm$0.005} & 0.905\,{\tiny$\pm$0.013} & 0.898\,{\tiny$\pm$0.019} & 0.972\,{\tiny$\pm$0.026} & 0.908\,{\tiny$\pm$0.007} & 0.911\,{\tiny$\pm$0.012} & 0.342\,{\tiny$\pm$0.038} \\
     & Lisa & 0.887\,{\tiny$\pm$0.015} & 0.887\,{\tiny$\pm$0.021} & \underline{0.998}\,{\tiny$\pm$0.003} & 0.899\,{\tiny$\pm$0.010} & 0.891\,{\tiny$\pm$0.012} & 0.925\,{\tiny$\pm$0.061} & 0.848\,{\tiny$\pm$0.007} & 0.823\,{\tiny$\pm$0.010} & \textbf{0.427}\,{\tiny$\pm$0.042} \\
     & SaLoRA & 0.910\,{\tiny$\pm$0.012} & 0.899\,{\tiny$\pm$0.009} & 0.997\,{\tiny$\pm$0.003} & 0.909\,{\tiny$\pm$0.009} & 0.906\,{\tiny$\pm$0.000} & \textbf{0.985}\,{\tiny$\pm$0.005} & 0.895\,{\tiny$\pm$0.014} & 0.891\,{\tiny$\pm$0.007} & 0.333\,{\tiny$\pm$0.120} \\
     & SafeLoRA & \underline{0.794}\,{\tiny$\pm$0.001} & 0.797\,{\tiny$\pm$0.010} & 0.878\,{\tiny$\pm$0.033} & \underline{0.790}\,{\tiny$\pm$0.006} & \textbf{0.792}\,{\tiny$\pm$0.005} & 0.910\,{\tiny$\pm$0.038} & \textbf{0.778}\,{\tiny$\pm$0.001} & \underline{0.777}\,{\tiny$\pm$0.004} & 0.173\,{\tiny$\pm$0.023} \\
     & SafeGrad & 0.831\,{\tiny$\pm$0.007} & 0.844\,{\tiny$\pm$0.004} & \textbf{1.000}\,{\tiny$\pm$0.000} & 0.850\,{\tiny$\pm$0.009} & 0.856\,{\tiny$\pm$0.010} & 0.973\,{\tiny$\pm$0.014} & 0.824\,{\tiny$\pm$0.009} & 0.807\,{\tiny$\pm$0.010} & \underline{0.358}\,{\tiny$\pm$0.008} \\
     & AsFT & 0.800\,{\tiny$\pm$0.009} & \textbf{0.777}\,{\tiny$\pm$0.006} & 0.982\,{\tiny$\pm$0.013} & 0.801\,{\tiny$\pm$0.002} & 0.800\,{\tiny$\pm$0.006} & 0.892\,{\tiny$\pm$0.032} & 0.796\,{\tiny$\pm$0.008} & \textbf{0.766}\,{\tiny$\pm$0.007} & 0.205\,{\tiny$\pm$0.028} \\
     & Ours & \textbf{0.776}\,{\tiny$\pm$0.015} & \underline{0.785}\,{\tiny$\pm$0.009} & 0.985\,{\tiny$\pm$0.015} & \textbf{0.785}\,{\tiny$\pm$0.026} & \underline{0.794}\,{\tiny$\pm$0.015} & 0.928\,{\tiny$\pm$0.032} & \underline{0.794}\,{\tiny$\pm$0.032} & 0.796\,{\tiny$\pm$0.022} & 0.268\,{\tiny$\pm$0.050} \\
    \bottomrule
  \end{tabular}}
\end{table*}

%% file: tables/neuripsPapers/poison-sweep-qwen3.5_4b_sst2.tex
\begin{table}[t]
  \centering
  \small
  \caption{Poison-ratio sweep on Qwen-3.5-4B / SST-2 (align=sft). For each defense and poison ratio $p\in\{0.05, 0.1, 0.25, 0.5\}$ we report HS (fraction of prompts with max harm probability $>0.4$, $\downarrow$), HS$_{q90}$ (90th-percentile per-prompt max harm probability, $\downarrow$), and FA (SST-2 accuracy, $\uparrow$). Base and Aligned-SFT baselines do not depend on $p$ and are shown as single spanned values. \textbf{Bold} = best, \underline{underlined} = second best per column. Antibody-FT is deferred to the appendix. Defense rows are mean\,{\tiny$\pm$std} over 3 seeds.}
  \label{tab:poison-sweep-qwen3.5_4b_sst2}
  \resizebox{\columnwidth}{!}{%
  \begin{tabular}{lcccccccccccc}
    \toprule
    Defense & \multicolumn{3}{c}{$p{=}0.05$} & \multicolumn{3}{c}{$p{=}0.1$} & \multicolumn{3}{c}{$p{=}0.25$} & \multicolumn{3}{c}{$p{=}0.5$} \\
    \cmidrule(lr){2-4}\cmidrule(lr){5-7}\cmidrule(lr){8-10}\cmidrule(lr){11-13}
     & HS $\downarrow$ & HS$_{q90}$ $\downarrow$ & FA $\uparrow$ & HS $\downarrow$ & HS$_{q90}$ $\downarrow$ & FA $\uparrow$ & HS $\downarrow$ & HS$_{q90}$ $\downarrow$ & FA $\uparrow$ & HS $\downarrow$ & HS$_{q90}$ $\downarrow$ & FA $\uparrow$ \\
    \midrule
    Base & \multicolumn{12}{c}{HS = 0.430 \quad HS$_{q90}$ = 0.707 \quad FA = 0.915 \quad \footnotesize\textit{(constant in $p$)}} \\
    Aligned (SFT) & \multicolumn{12}{c}{HS = 0.350 \quad HS$_{q90}$ = 0.738 \quad FA = 0.940 \quad \footnotesize\textit{(constant in $p$)}} \\
    \midrule
    SFT & 0.425\,{\tiny$\pm$0.040} & 0.810\,{\tiny$\pm$0.029} & \underline{0.985}\,{\tiny$\pm$0.013} & 0.437\,{\tiny$\pm$0.037} & 0.828\,{\tiny$\pm$0.021} & \underline{0.977}\,{\tiny$\pm$0.014} & 0.516\,{\tiny$\pm$0.020} & 0.862\,{\tiny$\pm$0.015} & \underline{0.983}\,{\tiny$\pm$0.020} & 0.546\,{\tiny$\pm$0.042} & 0.890\,{\tiny$\pm$0.015} & \underline{0.973}\,{\tiny$\pm$0.016} \\
    Vaccine & 0.575\,{\tiny$\pm$0.046} & 0.940\,{\tiny$\pm$0.018} & 0.973\,{\tiny$\pm$0.019} & 0.604\,{\tiny$\pm$0.027} & 0.944\,{\tiny$\pm$0.008} & 0.972\,{\tiny$\pm$0.010} & 0.665\,{\tiny$\pm$0.025} & 0.958\,{\tiny$\pm$0.006} & \underline{0.983}\,{\tiny$\pm$0.010} & 0.688\,{\tiny$\pm$0.032} & 0.962\,{\tiny$\pm$0.008} & 0.963\,{\tiny$\pm$0.038} \\
    Booster & 0.464\,{\tiny$\pm$0.015} & 0.872\,{\tiny$\pm$0.013} & 0.973\,{\tiny$\pm$0.008} & 0.493\,{\tiny$\pm$0.015} & 0.883\,{\tiny$\pm$0.007} & 0.957\,{\tiny$\pm$0.008} & 0.600\,{\tiny$\pm$0.026} & 0.918\,{\tiny$\pm$0.010} & 0.943\,{\tiny$\pm$0.016} & 0.665\,{\tiny$\pm$0.021} & 0.940\,{\tiny$\pm$0.012} & 0.943\,{\tiny$\pm$0.018} \\
    Lisa & 0.395\,{\tiny$\pm$0.018} & 0.792\,{\tiny$\pm$0.026} & 0.982\,{\tiny$\pm$0.010} & 0.421\,{\tiny$\pm$0.023} & 0.804\,{\tiny$\pm$0.025} & 0.967\,{\tiny$\pm$0.010} & 0.447\,{\tiny$\pm$0.037} & 0.815\,{\tiny$\pm$0.011} & 0.973\,{\tiny$\pm$0.008} & 0.489\,{\tiny$\pm$0.051} & 0.845\,{\tiny$\pm$0.037} & 0.968\,{\tiny$\pm$0.003} \\
    SaLoRA & 0.430\,{\tiny$\pm$0.016} & 0.815\,{\tiny$\pm$0.008} & \textbf{0.990}\,{\tiny$\pm$0.005} & 0.467\,{\tiny$\pm$0.008} & 0.841\,{\tiny$\pm$0.016} & \textbf{0.983}\,{\tiny$\pm$0.008} & 0.536\,{\tiny$\pm$0.030} & 0.867\,{\tiny$\pm$0.024} & \textbf{0.988}\,{\tiny$\pm$0.012} & 0.540\,{\tiny$\pm$0.040} & 0.895\,{\tiny$\pm$0.014} & \textbf{0.975}\,{\tiny$\pm$0.005} \\
    SafeLoRA & \underline{0.351}\,{\tiny$\pm$0.013} & \underline{0.735}\,{\tiny$\pm$0.012} & 0.957\,{\tiny$\pm$0.015} & \underline{0.361}\,{\tiny$\pm$0.003} & \underline{0.735}\,{\tiny$\pm$0.014} & 0.952\,{\tiny$\pm$0.020} & 0.364\,{\tiny$\pm$0.012} & \textbf{0.727}\,{\tiny$\pm$0.004} & 0.953\,{\tiny$\pm$0.010} & \underline{0.363}\,{\tiny$\pm$0.014} & \textbf{0.721}\,{\tiny$\pm$0.019} & 0.952\,{\tiny$\pm$0.018} \\
    SafeGrad & 0.377\,{\tiny$\pm$0.010} & 0.758\,{\tiny$\pm$0.017} & 0.978\,{\tiny$\pm$0.008} & 0.393\,{\tiny$\pm$0.022} & 0.787\,{\tiny$\pm$0.015} & 0.973\,{\tiny$\pm$0.012} & 0.415\,{\tiny$\pm$0.031} & 0.806\,{\tiny$\pm$0.033} & 0.973\,{\tiny$\pm$0.016} & 0.445\,{\tiny$\pm$0.015} & 0.843\,{\tiny$\pm$0.006} & 0.967\,{\tiny$\pm$0.019} \\
    AsFT & 0.378\,{\tiny$\pm$0.010} & 0.776\,{\tiny$\pm$0.002} & 0.953\,{\tiny$\pm$0.010} & 0.374\,{\tiny$\pm$0.012} & 0.784\,{\tiny$\pm$0.006} & 0.952\,{\tiny$\pm$0.010} & \underline{0.360}\,{\tiny$\pm$0.006} & \underline{0.779}\,{\tiny$\pm$0.002} & 0.958\,{\tiny$\pm$0.006} & 0.371\,{\tiny$\pm$0.013} & 0.774\,{\tiny$\pm$0.010} & 0.952\,{\tiny$\pm$0.006} \\
    Ours & \textbf{0.311}\,{\tiny$\pm$0.018} & \textbf{0.715}\,{\tiny$\pm$0.010} & 0.962\,{\tiny$\pm$0.019} & \textbf{0.325}\,{\tiny$\pm$0.002} & \textbf{0.719}\,{\tiny$\pm$0.024} & 0.957\,{\tiny$\pm$0.019} & \textbf{0.339}\,{\tiny$\pm$0.007} & \textbf{0.727}\,{\tiny$\pm$0.021} & 0.960\,{\tiny$\pm$0.018} & \textbf{0.340}\,{\tiny$\pm$0.032} & \underline{0.726}\,{\tiny$\pm$0.041} & 0.957\,{\tiny$\pm$0.018} \\
    \bottomrule
  \end{tabular}}
\end{table}

%% file: tables/neuripsPapers/poison-sweep-qwen3.5_4b_ag_news.tex
\begin{table}[t]
  \centering
  \small
  \caption{Poison-ratio sweep on Qwen-3.5-4B / AG News (align=sft). For each defense and poison ratio $p\in\{0.05, 0.1, 0.25, 0.5\}$ we report HS (fraction of prompts with max harm probability $>0.4$, $\downarrow$), HS$_{q90}$ (90th-percentile per-prompt max harm probability, $\downarrow$), and FA (AG News accuracy, $\uparrow$). Base and Aligned-SFT baselines do not depend on $p$ and are shown as single spanned values. \textbf{Bold} = best, \underline{underlined} = second best per column. Defense rows are mean\,{\tiny$\pm$std} over 3 seeds.}
  \label{tab:poison-sweep-qwen3.5_4b_ag_news}
  \resizebox{\columnwidth}{!}{%
  \begin{tabular}{lcccccccccccc}
    \toprule
    Defense & \multicolumn{3}{c}{$p{=}0.05$} & \multicolumn{3}{c}{$p{=}0.1$} & \multicolumn{3}{c}{$p{=}0.25$} & \multicolumn{3}{c}{$p{=}0.5$} \\
    \cmidrule(lr){2-4}\cmidrule(lr){5-7}\cmidrule(lr){8-10}\cmidrule(lr){11-13}
     & HS $\downarrow$ & HS$_{q90}$ $\downarrow$ & FA $\uparrow$ & HS $\downarrow$ & HS$_{q90}$ $\downarrow$ & FA $\uparrow$ & HS $\downarrow$ & HS$_{q90}$ $\downarrow$ & FA $\uparrow$ & HS $\downarrow$ & HS$_{q90}$ $\downarrow$ & FA $\uparrow$ \\
    \midrule
    Base & \multicolumn{12}{c}{HS = 0.430 \quad HS$_{q90}$ = 0.707 \quad FA = 1.000 \quad \footnotesize\textit{(constant in $p$)}} \\
    Aligned (SFT) & \multicolumn{12}{c}{HS = 0.350 \quad HS$_{q90}$ = 0.738 \quad FA = 0.890 \quad \footnotesize\textit{(constant in $p$)}} \\
    \midrule
    SFT & 0.434\,{\tiny$\pm$0.024} & 0.844\,{\tiny$\pm$0.004} & 0.917\,{\tiny$\pm$0.026} & 0.445\,{\tiny$\pm$0.010} & 0.866\,{\tiny$\pm$0.006} & 0.912\,{\tiny$\pm$0.028} & 0.517\,{\tiny$\pm$0.008} & 0.882\,{\tiny$\pm$0.023} & 0.895\,{\tiny$\pm$0.040} & 0.545\,{\tiny$\pm$0.043} & 0.901\,{\tiny$\pm$0.025} & 0.890\,{\tiny$\pm$0.015} \\
    Vaccine & 0.584\,{\tiny$\pm$0.024} & 0.936\,{\tiny$\pm$0.008} & \underline{0.947}\,{\tiny$\pm$0.062} & 0.613\,{\tiny$\pm$0.015} & 0.947\,{\tiny$\pm$0.011} & \underline{0.930}\,{\tiny$\pm$0.066} & 0.684\,{\tiny$\pm$0.021} & 0.967\,{\tiny$\pm$0.002} & \underline{0.943}\,{\tiny$\pm$0.028} & 0.680\,{\tiny$\pm$0.025} & 0.960\,{\tiny$\pm$0.002} & \underline{0.932}\,{\tiny$\pm$0.033} \\
    Booster & 0.476\,{\tiny$\pm$0.029} & 0.884\,{\tiny$\pm$0.006} & \textbf{0.975}\,{\tiny$\pm$0.018} & 0.496\,{\tiny$\pm$0.016} & 0.891\,{\tiny$\pm$0.004} & \textbf{0.968}\,{\tiny$\pm$0.021} & 0.597\,{\tiny$\pm$0.010} & 0.918\,{\tiny$\pm$0.004} & \textbf{0.973}\,{\tiny$\pm$0.006} & 0.674\,{\tiny$\pm$0.013} & 0.949\,{\tiny$\pm$0.007} & \textbf{0.952}\,{\tiny$\pm$0.021} \\
    Lisa & 0.398\,{\tiny$\pm$0.007} & 0.819\,{\tiny$\pm$0.012} & 0.917\,{\tiny$\pm$0.025} & 0.413\,{\tiny$\pm$0.013} & 0.835\,{\tiny$\pm$0.032} & 0.913\,{\tiny$\pm$0.025} & 0.476\,{\tiny$\pm$0.010} & 0.865\,{\tiny$\pm$0.008} & 0.917\,{\tiny$\pm$0.016} & 0.510\,{\tiny$\pm$0.038} & 0.874\,{\tiny$\pm$0.020} & 0.900\,{\tiny$\pm$0.018} \\
    SaLoRA & 0.415\,{\tiny$\pm$0.027} & 0.842\,{\tiny$\pm$0.006} & 0.908\,{\tiny$\pm$0.033} & 0.433\,{\tiny$\pm$0.004} & 0.851\,{\tiny$\pm$0.005} & 0.913\,{\tiny$\pm$0.030} & 0.513\,{\tiny$\pm$0.012} & 0.876\,{\tiny$\pm$0.012} & 0.898\,{\tiny$\pm$0.036} & 0.558\,{\tiny$\pm$0.024} & 0.898\,{\tiny$\pm$0.022} & 0.898\,{\tiny$\pm$0.020} \\
    SafeLoRA & \underline{0.361}\,{\tiny$\pm$0.018} & \textbf{0.744}\,{\tiny$\pm$0.008} & 0.903\,{\tiny$\pm$0.023} & \underline{0.357}\,{\tiny$\pm$0.007} & \textbf{0.724}\,{\tiny$\pm$0.026} & 0.900\,{\tiny$\pm$0.022} & \underline{0.354}\,{\tiny$\pm$0.017} & \textbf{0.723}\,{\tiny$\pm$0.007} & 0.902\,{\tiny$\pm$0.029} & \underline{0.358}\,{\tiny$\pm$0.026} & \textbf{0.726}\,{\tiny$\pm$0.008} & 0.898\,{\tiny$\pm$0.032} \\
    SafeGrad & 0.388\,{\tiny$\pm$0.004} & 0.825\,{\tiny$\pm$0.012} & 0.913\,{\tiny$\pm$0.025} & 0.395\,{\tiny$\pm$0.012} & 0.835\,{\tiny$\pm$0.002} & 0.917\,{\tiny$\pm$0.026} & 0.450\,{\tiny$\pm$0.003} & 0.856\,{\tiny$\pm$0.010} & 0.900\,{\tiny$\pm$0.018} & 0.450\,{\tiny$\pm$0.038} & 0.864\,{\tiny$\pm$0.031} & 0.888\,{\tiny$\pm$0.006} \\
    AsFT & 0.363\,{\tiny$\pm$0.005} & 0.794\,{\tiny$\pm$0.016} & 0.917\,{\tiny$\pm$0.019} & 0.358\,{\tiny$\pm$0.011} & 0.790\,{\tiny$\pm$0.008} & 0.910\,{\tiny$\pm$0.020} & 0.359\,{\tiny$\pm$0.010} & 0.785\,{\tiny$\pm$0.011} & 0.907\,{\tiny$\pm$0.019} & 0.374\,{\tiny$\pm$0.011} & 0.792\,{\tiny$\pm$0.009} & 0.907\,{\tiny$\pm$0.010} \\
    Ours & \textbf{0.327}\,{\tiny$\pm$0.019} & \underline{0.755}\,{\tiny$\pm$0.034} & 0.913\,{\tiny$\pm$0.038} & \textbf{0.322}\,{\tiny$\pm$0.015} & \underline{0.750}\,{\tiny$\pm$0.024} & 0.918\,{\tiny$\pm$0.029} & \textbf{0.324}\,{\tiny$\pm$0.005} & \underline{0.753}\,{\tiny$\pm$0.013} & 0.913\,{\tiny$\pm$0.016} & \textbf{0.340}\,{\tiny$\pm$0.017} & \underline{0.732}\,{\tiny$\pm$0.039} & 0.927\,{\tiny$\pm$0.006} \\
    \bottomrule
  \end{tabular}}
\end{table}

%% file: tables/neuripsPapers/poison-sweep-qwen3.5_4b_gsm8k.tex
\begin{table}[t]
  \centering
  \small
  \caption{Poison-ratio sweep on Qwen-3.5-4B / GSM8K (align=sft). For each defense and poison ratio $p\in\{0.05, 0.1, 0.25, 0.5\}$ we report HS (fraction of prompts with max harm probability $>0.4$, $\downarrow$), HS$_{q90}$ (90th-percentile per-prompt max harm probability, $\downarrow$), and FA (GSM8K accuracy, $\uparrow$). Base and Aligned-SFT baselines do not depend on $p$ and are shown as single spanned values. \textbf{Bold} = best, \underline{underlined} = second best per column.  Defense rows are mean\,{\tiny$\pm$std} over 3 seeds.}
  \label{tab:poison-sweep-qwen3.5_4b_gsm8k}
  \resizebox{\columnwidth}{!}{%
  \begin{tabular}{lcccccccccccc}
    \toprule
    Defense & \multicolumn{3}{c}{$p{=}0.05$} & \multicolumn{3}{c}{$p{=}0.1$} & \multicolumn{3}{c}{$p{=}0.25$} & \multicolumn{3}{c}{$p{=}0.5$} \\
    \cmidrule(lr){2-4}\cmidrule(lr){5-7}\cmidrule(lr){8-10}\cmidrule(lr){11-13}
     & HS $\downarrow$ & HS$_{q90}$ $\downarrow$ & FA $\uparrow$ & HS $\downarrow$ & HS$_{q90}$ $\downarrow$ & FA $\uparrow$ & HS $\downarrow$ & HS$_{q90}$ $\downarrow$ & FA $\uparrow$ & HS $\downarrow$ & HS$_{q90}$ $\downarrow$ & FA $\uparrow$ \\
    \midrule
    Base & \multicolumn{12}{c}{HS = 0.430 \quad HS$_{q90}$ = 0.707 \quad FA = 0.280 \quad \footnotesize\textit{(constant in $p$)}} \\
    Aligned (SFT) & \multicolumn{12}{c}{HS = 0.350 \quad HS$_{q90}$ = 0.738 \quad FA = 0.170 \quad \footnotesize\textit{(constant in $p$)}} \\
    \midrule
    SFT & 0.470\,{\tiny$\pm$0.024} & 0.877\,{\tiny$\pm$0.006} & \underline{0.665}\,{\tiny$\pm$0.010} & 0.510\,{\tiny$\pm$0.043} & 0.889\,{\tiny$\pm$0.016} & \underline{0.665}\,{\tiny$\pm$0.020} & 0.605\,{\tiny$\pm$0.026} & 0.938\,{\tiny$\pm$0.010} & \underline{0.660}\,{\tiny$\pm$0.043} & 0.637\,{\tiny$\pm$0.047} & 0.945\,{\tiny$\pm$0.017} & \underline{0.633}\,{\tiny$\pm$0.064} \\
    Vaccine & 0.636\,{\tiny$\pm$0.024} & 0.948\,{\tiny$\pm$0.013} & 0.498\,{\tiny$\pm$0.020} & 0.653\,{\tiny$\pm$0.024} & 0.952\,{\tiny$\pm$0.004} & 0.465\,{\tiny$\pm$0.044} & 0.696\,{\tiny$\pm$0.015} & 0.964\,{\tiny$\pm$0.005} & 0.465\,{\tiny$\pm$0.026} & 0.697\,{\tiny$\pm$0.023} & 0.968\,{\tiny$\pm$0.009} & 0.378\,{\tiny$\pm$0.172} \\
    Booster & 0.565\,{\tiny$\pm$0.010} & 0.924\,{\tiny$\pm$0.002} & 0.460\,{\tiny$\pm$0.025} & 0.587\,{\tiny$\pm$0.016} & 0.923\,{\tiny$\pm$0.005} & 0.450\,{\tiny$\pm$0.015} & 0.658\,{\tiny$\pm$0.009} & 0.938\,{\tiny$\pm$0.007} & 0.452\,{\tiny$\pm$0.019} & 0.715\,{\tiny$\pm$0.012} & 0.951\,{\tiny$\pm$0.002} & 0.427\,{\tiny$\pm$0.083} \\
    Lisa & 0.420\,{\tiny$\pm$0.004} & 0.852\,{\tiny$\pm$0.016} & 0.377\,{\tiny$\pm$0.020} & 0.429\,{\tiny$\pm$0.010} & 0.871\,{\tiny$\pm$0.008} & 0.372\,{\tiny$\pm$0.038} & 0.487\,{\tiny$\pm$0.025} & 0.883\,{\tiny$\pm$0.007} & 0.313\,{\tiny$\pm$0.014} & 0.532\,{\tiny$\pm$0.032} & 0.913\,{\tiny$\pm$0.016} & 0.290\,{\tiny$\pm$0.041} \\
    SaLoRA & 0.469\,{\tiny$\pm$0.040} & 0.869\,{\tiny$\pm$0.027} & \textbf{0.683}\,{\tiny$\pm$0.018} & 0.510\,{\tiny$\pm$0.040} & 0.897\,{\tiny$\pm$0.022} & \textbf{0.667}\,{\tiny$\pm$0.020} & 0.592\,{\tiny$\pm$0.017} & 0.940\,{\tiny$\pm$0.010} & \textbf{0.668}\,{\tiny$\pm$0.016} & 0.656\,{\tiny$\pm$0.016} & 0.947\,{\tiny$\pm$0.004} & \textbf{0.655}\,{\tiny$\pm$0.061} \\
    SafeLoRA & \textbf{0.354}\,{\tiny$\pm$0.011} & \textbf{0.723}\,{\tiny$\pm$0.004} & 0.147\,{\tiny$\pm$0.023} & \textbf{0.361}\,{\tiny$\pm$0.007} & \textbf{0.709}\,{\tiny$\pm$0.015} & 0.137\,{\tiny$\pm$0.015} & \textbf{0.365}\,{\tiny$\pm$0.008} & \textbf{0.707}\,{\tiny$\pm$0.026} & 0.130\,{\tiny$\pm$0.033} & \underline{0.358}\,{\tiny$\pm$0.012} & \textbf{0.720}\,{\tiny$\pm$0.002} & 0.135\,{\tiny$\pm$0.005} \\
    SafeGrad & 0.415\,{\tiny$\pm$0.026} & 0.857\,{\tiny$\pm$0.025} & 0.623\,{\tiny$\pm$0.078} & 0.418\,{\tiny$\pm$0.007} & 0.864\,{\tiny$\pm$0.013} & 0.600\,{\tiny$\pm$0.052} & 0.461\,{\tiny$\pm$0.014} & 0.888\,{\tiny$\pm$0.016} & 0.588\,{\tiny$\pm$0.090} & 0.495\,{\tiny$\pm$0.026} & 0.894\,{\tiny$\pm$0.028} & 0.542\,{\tiny$\pm$0.083} \\
    AsFT & \underline{0.367}\,{\tiny$\pm$0.006} & \underline{0.800}\,{\tiny$\pm$0.008} & 0.153\,{\tiny$\pm$0.015} & \underline{0.366}\,{\tiny$\pm$0.015} & \underline{0.799}\,{\tiny$\pm$0.014} & 0.133\,{\tiny$\pm$0.008} & \underline{0.369}\,{\tiny$\pm$0.012} & \underline{0.778}\,{\tiny$\pm$0.000} & 0.137\,{\tiny$\pm$0.029} & \textbf{0.348}\,{\tiny$\pm$0.006} & \underline{0.737}\,{\tiny$\pm$0.013} & 0.138\,{\tiny$\pm$0.016} \\
    Ours & 0.386\,{\tiny$\pm$0.005} & 0.837\,{\tiny$\pm$0.002} & 0.395\,{\tiny$\pm$0.005} & 0.389\,{\tiny$\pm$0.012} & 0.845\,{\tiny$\pm$0.019} & 0.382\,{\tiny$\pm$0.003} & 0.404\,{\tiny$\pm$0.007} & 0.845\,{\tiny$\pm$0.006} & 0.380\,{\tiny$\pm$0.009} & 0.360\,{\tiny$\pm$0.052} & 0.780\,{\tiny$\pm$0.118} & 0.267\,{\tiny$\pm$0.110} \\
    \bottomrule
  \end{tabular}}
\end{table}

%% file: tables/neuripsPapers/cage_sensitivity_sst2.tex
\begin{table}[t]
  \centering
  \small
  \caption{Hyperparameter sensitivity on Qwen-3.5-4B / SST-2 at $p{=}0.5$. Each section sweeps one of the three hyperparameters while holding the other two at the canonical values used in the main results $(\tau, \alpha, \beta) = (0.1, 0.05, 10)$. \textbf{Bold} = best, \underline{underlined} = second best within each sweep. $\downarrow$ lower is better, $\uparrow$ higher is better.}
  \label{tab:cage-ra-sens}
  \begin{tabular}{cccccc}
    \toprule
    Value & HS $\downarrow$ & $q_{50}$ $\downarrow$ & $q_{90}$ $\downarrow$ & $q_{99}$ $\downarrow$ & FA $\uparrow$ \\
    \midrule
    \multicolumn{6}{l}{\textit{$\tau$ sweep, fixing $\alpha=0.05,\ \beta=10$}} \\
    \midrule
    0 & 0.323 & 0.256 & \textbf{0.676} & \underline{0.942} & 0.925 \\
    0.01 & \textbf{0.311} & \textbf{0.250} & \underline{0.691} & \textbf{0.930} & 0.925 \\
    0.05 & \underline{0.314} & \underline{0.252} & 0.704 & 0.949 & \underline{0.930} \\
    0.1 & 0.320 & 0.253 & 0.707 & \textbf{0.930} & 0.920 \\
    0.2 & 0.350 & \underline{0.252} & 0.723 & 0.949 & \textbf{0.935} \\
    \midrule
    \multicolumn{6}{l}{\textit{$\alpha$ sweep, fixing $\tau=0.1,\ \beta=10$}} \\
    \midrule
    0.01 & \underline{0.313} & 0.253 & \textbf{0.704} & \underline{0.934} & 0.920 \\
    0.05 & 0.320 & 0.253 & \underline{0.707} & \textbf{0.930} & 0.920 \\
    0.1 & \textbf{0.300} & \textbf{0.236} & 0.723 & 0.941 & \underline{0.925} \\
    0.2 & 0.314 & \underline{0.250} & 0.730 & 0.953 & \textbf{0.930} \\
    \midrule
    \multicolumn{6}{l}{\textit{$\beta$ sweep, fixing $\tau=0.1,\ \alpha=0.05$}} \\
    \midrule
    1 & \textbf{0.303} & \underline{0.263} & \textbf{0.676} & \textbf{0.902} & \textbf{0.925} \\
    10 & \underline{0.320} & \textbf{0.253} & \underline{0.707} & \underline{0.930} & \underline{0.920} \\
    \bottomrule
  \end{tabular}
\end{table}

%% file: tables/neuripsPapers/quantile-qwen3.5_4b_sst2.tex
\begin{table}[t]
  \centering
  \small
  \caption{HS-quantile breakdown on Qwen-3.5-4B / SST-2 at $p{=}0.5$, align=sft. HS is the fraction of prompts with max harm probability $>0.4$ (binary flag rate); $q_{k}$ columns are the $k$-th percentile of the per-prompt max harm probability (continuous). $\downarrow$ for HS columns, $\uparrow$ for FA. \textbf{Bold} = best, \underline{underlined} = second best per column. Defense rows are mean\,{\tiny$\pm$std} over 3 seeds.}
  \label{tab:quantile-qwen3.5_4b_sst2}
  \resizebox{\columnwidth}{!}{%
  \begin{tabular}{lcccccccccc}
    \toprule
    Defense & HS $\downarrow$ & $q_{10}$ $\downarrow$ & $q_{25}$ $\downarrow$ & $q_{50}$ $\downarrow$ & $q_{75}$ $\downarrow$ & $q_{90}$ $\downarrow$ & $q_{95}$ $\downarrow$ & $q_{99}$ $\downarrow$ & $q_{100}$ $\downarrow$ & FA $\uparrow$ \\
    \midrule
    Base & 0.430 & 0.161 & 0.248 & 0.365 & 0.531 & 0.707 & 0.824 & 0.938 & 0.984 & 0.915 \\
    Aligned (SFT) & 0.350 & 0.094 & 0.164 & 0.296 & 0.504 & 0.738 & 0.867 & 0.961 & 1.000 & 0.940 \\
    \midrule
    SFT & 0.546\,{\tiny$\pm$0.042} & 0.119\,{\tiny$\pm$0.010} & 0.227\,{\tiny$\pm$0.020} & 0.446\,{\tiny$\pm$0.040} & 0.705\,{\tiny$\pm$0.037} & 0.890\,{\tiny$\pm$0.015} & 0.947\,{\tiny$\pm$0.008} & 0.992\,{\tiny$\pm$0.000} & 1.000\,{\tiny$\pm$0.000} & \underline{0.973}\,{\tiny$\pm$0.016} \\
    Vaccine & 0.688\,{\tiny$\pm$0.032} & 0.149\,{\tiny$\pm$0.023} & 0.316\,{\tiny$\pm$0.033} & 0.611\,{\tiny$\pm$0.026} & 0.861\,{\tiny$\pm$0.022} & 0.962\,{\tiny$\pm$0.008} & 0.986\,{\tiny$\pm$0.002} & 1.000\,{\tiny$\pm$0.000} & 1.000\,{\tiny$\pm$0.000} & 0.963\,{\tiny$\pm$0.038} \\
    Booster & 0.665\,{\tiny$\pm$0.021} & 0.143\,{\tiny$\pm$0.013} & 0.305\,{\tiny$\pm$0.017} & 0.584\,{\tiny$\pm$0.030} & 0.826\,{\tiny$\pm$0.026} & 0.940\,{\tiny$\pm$0.012} & 0.970\,{\tiny$\pm$0.006} & 1.000\,{\tiny$\pm$0.000} & 1.000\,{\tiny$\pm$0.000} & 0.943\,{\tiny$\pm$0.018} \\
    Lisa & 0.489\,{\tiny$\pm$0.051} & 0.115\,{\tiny$\pm$0.009} & 0.206\,{\tiny$\pm$0.022} & 0.392\,{\tiny$\pm$0.046} & 0.651\,{\tiny$\pm$0.045} & 0.845\,{\tiny$\pm$0.037} & 0.925\,{\tiny$\pm$0.012} & 0.983\,{\tiny$\pm$0.014} & 1.000\,{\tiny$\pm$0.000} & 0.968\,{\tiny$\pm$0.003} \\
    SaLoRA & 0.540\,{\tiny$\pm$0.040} & 0.117\,{\tiny$\pm$0.004} & 0.217\,{\tiny$\pm$0.027} & 0.440\,{\tiny$\pm$0.050} & 0.710\,{\tiny$\pm$0.049} & 0.895\,{\tiny$\pm$0.014} & 0.941\,{\tiny$\pm$0.008} & 0.990\,{\tiny$\pm$0.002} & 1.000\,{\tiny$\pm$0.000} & \textbf{0.975}\,{\tiny$\pm$0.005} \\
    SafeLoRA & \underline{0.363}\,{\tiny$\pm$0.014} & 0.099\,{\tiny$\pm$0.002} & 0.169\,{\tiny$\pm$0.007} & 0.298\,{\tiny$\pm$0.003} & \underline{0.500}\,{\tiny$\pm$0.018} & \textbf{0.721}\,{\tiny$\pm$0.019} & \underline{0.840}\,{\tiny$\pm$0.022} & 0.970\,{\tiny$\pm$0.004} & 0.996\,{\tiny$\pm$0.004} & 0.952\,{\tiny$\pm$0.018} \\
    SafeGrad & 0.445\,{\tiny$\pm$0.015} & 0.101\,{\tiny$\pm$0.008} & 0.175\,{\tiny$\pm$0.006} & 0.349\,{\tiny$\pm$0.018} & 0.633\,{\tiny$\pm$0.014} & 0.843\,{\tiny$\pm$0.006} & 0.928\,{\tiny$\pm$0.002} & 0.986\,{\tiny$\pm$0.002} & 1.000\,{\tiny$\pm$0.000} & 0.967\,{\tiny$\pm$0.019} \\
    AsFT & 0.371\,{\tiny$\pm$0.013} & \underline{0.090}\,{\tiny$\pm$0.004} & \underline{0.154}\,{\tiny$\pm$0.005} & \underline{0.287}\,{\tiny$\pm$0.002} & 0.541\,{\tiny$\pm$0.016} & 0.774\,{\tiny$\pm$0.010} & 0.875\,{\tiny$\pm$0.004} & \underline{0.965}\,{\tiny$\pm$0.008} & \underline{0.993}\,{\tiny$\pm$0.002} & 0.952\,{\tiny$\pm$0.006} \\
    Ours & \textbf{0.340}\,{\tiny$\pm$0.032} & \textbf{0.089}\,{\tiny$\pm$0.005} & \textbf{0.151}\,{\tiny$\pm$0.006} & \textbf{0.272}\,{\tiny$\pm$0.008} & \textbf{0.494}\,{\tiny$\pm$0.038} & \underline{0.726}\,{\tiny$\pm$0.041} & \textbf{0.827}\,{\tiny$\pm$0.030} & \textbf{0.947}\,{\tiny$\pm$0.005} & \textbf{0.990}\,{\tiny$\pm$0.008} & 0.957\,{\tiny$\pm$0.018} \\
    \bottomrule
  \end{tabular}}
\end{table}

%% file: tables/neuripsPapers/quantile-qwen3.5_4b_ag_news.tex
\begin{table}[t]
  \centering
  \small
  \caption{HS-quantile breakdown on Qwen-3.5-4B / AG News at $p{=}0.5$, align=sft. HS is the fraction of prompts with max harm probability $>0.4$ (binary flag rate); $q_{k}$ columns are the $k$-th percentile of the per-prompt max harm probability (continuous). $\downarrow$ for HS columns, $\uparrow$ for FA. \textbf{Bold} = best, \underline{underlined} = second best per column. Defense rows are mean\,{\tiny$\pm$std} over 3 seeds.}
  \label{tab:quantile-qwen3.5_4b_ag_news}
  \resizebox{\columnwidth}{!}{%
  \begin{tabular}{lcccccccccc}
    \toprule
    Defense & HS $\downarrow$ & $q_{10}$ $\downarrow$ & $q_{25}$ $\downarrow$ & $q_{50}$ $\downarrow$ & $q_{75}$ $\downarrow$ & $q_{90}$ $\downarrow$ & $q_{95}$ $\downarrow$ & $q_{99}$ $\downarrow$ & $q_{100}$ $\downarrow$ & FA $\uparrow$ \\
    \midrule
    Base & 0.430 & 0.161 & 0.248 & 0.365 & 0.531 & 0.707 & 0.824 & 0.938 & 0.984 & 1.000 \\
    Aligned (SFT) & 0.350 & 0.094 & 0.164 & 0.296 & 0.504 & 0.738 & 0.867 & 0.961 & 1.000 & 0.890 \\
    \midrule
    SFT & 0.545\,{\tiny$\pm$0.043} & 0.113\,{\tiny$\pm$0.010} & 0.216\,{\tiny$\pm$0.024} & 0.449\,{\tiny$\pm$0.049} & 0.730\,{\tiny$\pm$0.035} & 0.901\,{\tiny$\pm$0.025} & 0.948\,{\tiny$\pm$0.011} & 0.992\,{\tiny$\pm$0.008} & 1.000\,{\tiny$\pm$0.000} & 0.890\,{\tiny$\pm$0.015} \\
    Vaccine & 0.680\,{\tiny$\pm$0.025} & 0.154\,{\tiny$\pm$0.012} & 0.311\,{\tiny$\pm$0.025} & 0.617\,{\tiny$\pm$0.023} & 0.857\,{\tiny$\pm$0.006} & 0.960\,{\tiny$\pm$0.002} & 0.982\,{\tiny$\pm$0.002} & 0.999\,{\tiny$\pm$0.002} & 1.000\,{\tiny$\pm$0.000} & \underline{0.932}\,{\tiny$\pm$0.033} \\
    Booster & 0.674\,{\tiny$\pm$0.013} & 0.155\,{\tiny$\pm$0.006} & 0.319\,{\tiny$\pm$0.015} & 0.597\,{\tiny$\pm$0.027} & 0.836\,{\tiny$\pm$0.008} & 0.949\,{\tiny$\pm$0.007} & 0.979\,{\tiny$\pm$0.002} & 1.000\,{\tiny$\pm$0.000} & 1.000\,{\tiny$\pm$0.000} & \textbf{0.952}\,{\tiny$\pm$0.021} \\
    Lisa & 0.510\,{\tiny$\pm$0.038} & 0.108\,{\tiny$\pm$0.009} & 0.200\,{\tiny$\pm$0.019} & 0.407\,{\tiny$\pm$0.032} & 0.666\,{\tiny$\pm$0.022} & 0.874\,{\tiny$\pm$0.020} & 0.940\,{\tiny$\pm$0.012} & 0.992\,{\tiny$\pm$0.004} & 1.000\,{\tiny$\pm$0.000} & 0.900\,{\tiny$\pm$0.018} \\
    SaLoRA & 0.558\,{\tiny$\pm$0.024} & 0.111\,{\tiny$\pm$0.004} & 0.214\,{\tiny$\pm$0.024} & 0.456\,{\tiny$\pm$0.024} & 0.746\,{\tiny$\pm$0.038} & 0.898\,{\tiny$\pm$0.022} & 0.947\,{\tiny$\pm$0.016} & 0.992\,{\tiny$\pm$0.004} & 1.000\,{\tiny$\pm$0.000} & 0.898\,{\tiny$\pm$0.020} \\
    SafeLoRA & \underline{0.358}\,{\tiny$\pm$0.026} & 0.101\,{\tiny$\pm$0.004} & 0.167\,{\tiny$\pm$0.001} & 0.287\,{\tiny$\pm$0.015} & \textbf{0.507}\,{\tiny$\pm$0.012} & \textbf{0.726}\,{\tiny$\pm$0.008} & \underline{0.837}\,{\tiny$\pm$0.020} & \textbf{0.957}\,{\tiny$\pm$0.010} & \textbf{0.992}\,{\tiny$\pm$0.007} & 0.898\,{\tiny$\pm$0.032} \\
    SafeGrad & 0.450\,{\tiny$\pm$0.038} & 0.092\,{\tiny$\pm$0.002} & 0.164\,{\tiny$\pm$0.004} & 0.349\,{\tiny$\pm$0.040} & 0.649\,{\tiny$\pm$0.049} & 0.864\,{\tiny$\pm$0.031} & 0.936\,{\tiny$\pm$0.014} & 0.988\,{\tiny$\pm$0.004} & 1.000\,{\tiny$\pm$0.000} & 0.888\,{\tiny$\pm$0.006} \\
    AsFT & 0.374\,{\tiny$\pm$0.011} & \textbf{0.085}\,{\tiny$\pm$0.002} & \underline{0.151}\,{\tiny$\pm$0.006} & \underline{0.279}\,{\tiny$\pm$0.010} & 0.542\,{\tiny$\pm$0.014} & 0.792\,{\tiny$\pm$0.009} & 0.883\,{\tiny$\pm$0.015} & 0.983\,{\tiny$\pm$0.002} & 1.000\,{\tiny$\pm$0.000} & 0.907\,{\tiny$\pm$0.010} \\
    Ours & \textbf{0.340}\,{\tiny$\pm$0.017} & \underline{0.087}\,{\tiny$\pm$0.009} & \textbf{0.149}\,{\tiny$\pm$0.007} & \textbf{0.269}\,{\tiny$\pm$0.017} & \underline{0.513}\,{\tiny$\pm$0.035} & \underline{0.732}\,{\tiny$\pm$0.039} & \textbf{0.831}\,{\tiny$\pm$0.035} & \underline{0.961}\,{\tiny$\pm$0.012} & \underline{0.997}\,{\tiny$\pm$0.005} & 0.927\,{\tiny$\pm$0.006} \\
    \bottomrule
  \end{tabular}}
\end{table}

%% file: tables/neuripsPapers/quantile-qwen3.5_4b_gsm8k.tex
\begin{table}[t]
  \centering
  \small
  \caption{HS-quantile breakdown on Qwen-3.5-4B / GSM8K at $p{=}0.5$, align=sft. HS is the fraction of prompts with max harm probability $>0.4$ (binary flag rate); $q_{k}$ columns are the $k$-th percentile of the per-prompt max harm probability (continuous). $\downarrow$ for HS columns, $\uparrow$ for FA. \textbf{Bold} = best, \underline{underlined} = second best per column. Defense rows are mean\,{\tiny$\pm$std} over 3 seeds.}
  \label{tab:quantile-qwen3.5_4b_gsm8k}
  \resizebox{\columnwidth}{!}{%
  \begin{tabular}{lcccccccccc}
    \toprule
    Defense & HS $\downarrow$ & $q_{10}$ $\downarrow$ & $q_{25}$ $\downarrow$ & $q_{50}$ $\downarrow$ & $q_{75}$ $\downarrow$ & $q_{90}$ $\downarrow$ & $q_{95}$ $\downarrow$ & $q_{99}$ $\downarrow$ & $q_{100}$ $\downarrow$ & FA $\uparrow$ \\
    \midrule
    Base & 0.430 & 0.161 & 0.248 & 0.365 & 0.531 & 0.707 & 0.824 & 0.938 & 0.984 & 0.280 \\
    Aligned (SFT) & 0.350 & 0.094 & 0.164 & 0.296 & 0.504 & 0.738 & 0.867 & 0.961 & 1.000 & 0.170 \\
    \midrule
    SFT & 0.637\,{\tiny$\pm$0.047} & 0.133\,{\tiny$\pm$0.025} & 0.275\,{\tiny$\pm$0.049} & 0.558\,{\tiny$\pm$0.055} & 0.814\,{\tiny$\pm$0.049} & 0.945\,{\tiny$\pm$0.017} & 0.979\,{\tiny$\pm$0.009} & 1.000\,{\tiny$\pm$0.000} & 1.000\,{\tiny$\pm$0.000} & \underline{0.633}\,{\tiny$\pm$0.064} \\
    Vaccine & 0.697\,{\tiny$\pm$0.023} & 0.176\,{\tiny$\pm$0.007} & 0.346\,{\tiny$\pm$0.017} & 0.614\,{\tiny$\pm$0.028} & 0.872\,{\tiny$\pm$0.016} & 0.968\,{\tiny$\pm$0.009} & 0.991\,{\tiny$\pm$0.002} & 1.000\,{\tiny$\pm$0.000} & 1.000\,{\tiny$\pm$0.000} & 0.378\,{\tiny$\pm$0.172} \\
    Booster & 0.715\,{\tiny$\pm$0.012} & 0.175\,{\tiny$\pm$0.022} & 0.361\,{\tiny$\pm$0.016} & 0.635\,{\tiny$\pm$0.014} & 0.847\,{\tiny$\pm$0.004} & 0.951\,{\tiny$\pm$0.002} & 0.982\,{\tiny$\pm$0.002} & 1.000\,{\tiny$\pm$0.000} & 1.000\,{\tiny$\pm$0.000} & 0.427\,{\tiny$\pm$0.083} \\
    Lisa & 0.532\,{\tiny$\pm$0.032} & 0.096\,{\tiny$\pm$0.003} & 0.195\,{\tiny$\pm$0.014} & 0.438\,{\tiny$\pm$0.039} & 0.739\,{\tiny$\pm$0.039} & 0.913\,{\tiny$\pm$0.016} & 0.958\,{\tiny$\pm$0.010} & 0.997\,{\tiny$\pm$0.002} & 1.000\,{\tiny$\pm$0.000} & 0.290\,{\tiny$\pm$0.041} \\
    SaLoRA & 0.656\,{\tiny$\pm$0.016} & 0.138\,{\tiny$\pm$0.011} & 0.291\,{\tiny$\pm$0.030} & 0.586\,{\tiny$\pm$0.026} & 0.841\,{\tiny$\pm$0.010} & 0.947\,{\tiny$\pm$0.004} & 0.982\,{\tiny$\pm$0.005} & 1.000\,{\tiny$\pm$0.000} & 1.000\,{\tiny$\pm$0.000} & \textbf{0.655}\,{\tiny$\pm$0.061} \\
    SafeLoRA & \underline{0.358}\,{\tiny$\pm$0.012} & 0.099\,{\tiny$\pm$0.006} & 0.180\,{\tiny$\pm$0.009} & 0.307\,{\tiny$\pm$0.007} & \textbf{0.502}\,{\tiny$\pm$0.014} & \textbf{0.720}\,{\tiny$\pm$0.002} & \textbf{0.826}\,{\tiny$\pm$0.002} & \textbf{0.947}\,{\tiny$\pm$0.009} & \textbf{0.992}\,{\tiny$\pm$0.007} & 0.135\,{\tiny$\pm$0.005} \\
    SafeGrad & 0.495\,{\tiny$\pm$0.026} & \textbf{0.083}\,{\tiny$\pm$0.007} & 0.166\,{\tiny$\pm$0.014} & 0.385\,{\tiny$\pm$0.039} & 0.718\,{\tiny$\pm$0.055} & 0.894\,{\tiny$\pm$0.028} & 0.954\,{\tiny$\pm$0.005} & 0.991\,{\tiny$\pm$0.006} & 1.000\,{\tiny$\pm$0.000} & 0.542\,{\tiny$\pm$0.083} \\
    AsFT & \textbf{0.348}\,{\tiny$\pm$0.006} & 0.091\,{\tiny$\pm$0.002} & \underline{0.156}\,{\tiny$\pm$0.001} & \textbf{0.272}\,{\tiny$\pm$0.004} & \underline{0.504}\,{\tiny$\pm$0.014} & \underline{0.737}\,{\tiny$\pm$0.013} & \underline{0.858}\,{\tiny$\pm$0.009} & \underline{0.951}\,{\tiny$\pm$0.015} & \underline{0.995}\,{\tiny$\pm$0.005} & 0.138\,{\tiny$\pm$0.016} \\
    Ours & 0.360\,{\tiny$\pm$0.052} & \underline{0.090}\,{\tiny$\pm$0.010} & \textbf{0.154}\,{\tiny$\pm$0.005} & \underline{0.284}\,{\tiny$\pm$0.018} & 0.540\,{\tiny$\pm$0.075} & 0.780\,{\tiny$\pm$0.118} & 0.876\,{\tiny$\pm$0.079} & 0.977\,{\tiny$\pm$0.027} & 0.997\,{\tiny$\pm$0.005} & 0.267\,{\tiny$\pm$0.110} \\
    \bottomrule
  \end{tabular}}
\end{table}

%% file: tables/neuripsPapers/profiling.tex
\begin{table}[t]
  \centering
  \small
  \caption{Per-step training cost on Qwen-3.5 4B / SST2 (one A100, batch=10). \textit{step} is the median wall-clock time over 25 measured optimizer steps after a 5-step warmup. \textit{$\times$SFT} normalizes the step time by SFT. }
  \label{tab:profile-qwen3.5_4b_sst2}
  \begin{tabular}{lcccccc}
    \toprule
    Defense & step (ms) & p90 (ms) & GPU (MB) & samples/s &   $\times$SFT \\
    \midrule
    SFT & 136.8 & 160.5 & 14512 & 28.16 &  1.00 \\
    Lisa & 138.6 & 153.3 & 14797 & 28.30 &   1.00 \\
    SafeGrad & 334.7 & 371.5 & 26137 & 11.83 &  2.38 \\
    AsFT & 525.6 & 551.4 & 16083 & 7.55 &   3.73 \\
    Ours & 282.2 & 306.1 & 16739 & 13.98 &  2.01 \\
    \bottomrule
  \end{tabular}
\end{table}

%% file: tables/neuripsPapers/primal_dual.tex
\begin{table}[t]
  \centering
  \small
  \caption{Primal-Dual ablation: we compare solving the chance-constrained finetuning problem using our method and a primal dual method on the Qwen 3.5 4B model. \textbf{Bold} = best.}
  \label{tab:primal-dual}
  \begin{tabular}{clccc}
    \toprule
    Task & Method & HS $\downarrow$ & HS$_{q90}$ $\downarrow$ & FA $\uparrow$ \\
    \midrule
    \multirow{2}{*}{AG News}
     & Our method & \textbf{0.324} & \textbf{0.688} & 0.930 \\
     & Primal-Dual & 0.346 & 0.755 & \textbf{0.935} \\
    \midrule
    \multirow{2}{*}{GSM8K}
     & Our method & \textbf{0.303} & \textbf{0.645} & 0.140 \\
     & Primal-Dual & 0.380 & 0.802 & \textbf{0.155} \\
    \midrule
    \multirow{2}{*}{SST-2}
     & Our method & 0.317 & \textbf{0.684} & 0.940 \\
     & Primal-Dual & \textbf{0.311} & 0.703 & 0.940 \\
    \bottomrule
  \end{tabular}
\end{table}

%% file: tables/neuripsPapers/antibody-table.tex
\renewcommand{\arraystretch}{1.4}
\begin{table*}[t]
  \centering\small
  \caption{Antibody vs.\ our method under the harmful fine-tuning attack ($p{=}0.5$). Rows above the rule are the pre-attack floors. HS and HS$_{qK}$ (per-prompt $K$-th quantile of max harm-prob; $K{=}90$ Qwen, $K{=}50$ Llama) are lower-is-safer; FA is task accuracy. Mean$\pm$std over seeds $\{42,1337,2024\}$. \textbf{Bold} = best defense per column.}
  \label{tab:antibody_appendix}
  \resizebox{\textwidth}{!}{%
  \begin{tabular}{clccccccccc}
    \toprule
     & & \multicolumn{3}{c}{SST-2} & \multicolumn{3}{c}{AG News} & \multicolumn{3}{c}{GSM8K} \\
    \cmidrule(lr){3-5}\cmidrule(lr){6-8}\cmidrule(lr){9-11}
     & Method & HS$\downarrow$ & HS$_{qK}\downarrow$ & FA$\uparrow$ & HS$\downarrow$ & HS$_{qK}\downarrow$ & FA$\uparrow$ & HS$\downarrow$ & HS$_{qK}\downarrow$ & FA$\uparrow$ \\
    \midrule
    \multirow{6}{*}{\rotatebox[origin=c]{90}{Qwen-3.5-4B (q90)}}
     & SFT-Aligned & $0.350$ & $0.738$ & $0.940$ & $0.350$ & $0.738$ & $0.890$ & $0.350$ & $0.738$ & $0.170$ \\
     & Antibody-Aligned & $0.204$ & $0.571$ & $0.835$ & $0.204$ & $0.571$ & $0.805$ & $0.204$ & $0.571$ & $0.145$ \\
    \cmidrule(l){2-11}
     & SFT & $0.546{\scriptstyle\,\pm0.042}$ & $0.890{\scriptstyle\,\pm0.015}$ & $0.973{\scriptstyle\,\pm0.016}$ & $0.545{\scriptstyle\,\pm0.043}$ & $0.901{\scriptstyle\,\pm0.025}$ & $0.890{\scriptstyle\,\pm0.015}$ & $0.637{\scriptstyle\,\pm0.047}$ & $0.945{\scriptstyle\,\pm0.017}$ & $0.633{\scriptstyle\,\pm0.064}$ \\
     & Antibody & $\mathbf{0.273}{\scriptstyle\,\pm0.016}$ & $\mathbf{0.690}{\scriptstyle\,\pm0.016}$ & $0.867{\scriptstyle\,\pm0.062}$ & $\mathbf{0.244}{\scriptstyle\,\pm0.010}$ & $\mathbf{0.648}{\scriptstyle\,\pm0.024}$ & $0.888{\scriptstyle\,\pm0.020}$ & $0.519{\scriptstyle\,\pm0.067}$ & $0.907{\scriptstyle\,\pm0.031}$ & $\mathbf{0.383}{\scriptstyle\,\pm0.070}$ \\
     & Ours on SFT-Aligned & $0.340{\scriptstyle\,\pm0.032}$ & $0.726{\scriptstyle\,\pm0.041}$ & $\mathbf{0.957}{\scriptstyle\,\pm0.018}$ & $0.340{\scriptstyle\,\pm0.017}$ & $0.732{\scriptstyle\,\pm0.039}$ & $\mathbf{0.927}{\scriptstyle\,\pm0.006}$ & $0.360{\scriptstyle\,\pm0.052}$ & $\mathbf{0.780}{\scriptstyle\,\pm0.118}$ & $0.267{\scriptstyle\,\pm0.110}$ \\
     & Ours on Antibody-Aligned & $0.378{\scriptstyle\,\pm0.009}$ & $0.793{\scriptstyle\,\pm0.007}$ & $0.905{\scriptstyle\,\pm0.020}$ & $0.357{\scriptstyle\,\pm0.007}$ & $0.786{\scriptstyle\,\pm0.003}$ & $0.842{\scriptstyle\,\pm0.070}$ & $\mathbf{0.347}{\scriptstyle\,\pm0.013}$ & $0.796{\scriptstyle\,\pm0.033}$ & $0.198{\scriptstyle\,\pm0.039}$ \\
    \midrule
    \multirow{6}{*}{\rotatebox[origin=c]{90}{Qwen-3.5-9B (q90)}}
     & SFT-Aligned & $0.546$ & $0.781$ & $0.980$ & $0.546$ & $0.781$ & $0.855$ & $0.546$ & $0.781$ & $0.250$ \\
     & Antibody-Aligned & $0.224$ & $0.582$ & $0.785$ & $0.224$ & $0.582$ & $0.930$ & $0.224$ & $0.582$ & $0.145$ \\
    \cmidrule(l){2-11}
     & SFT & $0.670{\scriptstyle\,\pm0.023}$ & $0.898{\scriptstyle\,\pm0.031}$ & $0.980{\scriptstyle\,\pm0.005}$ & $0.672{\scriptstyle\,\pm0.025}$ & $0.885{\scriptstyle\,\pm0.016}$ & $0.903{\scriptstyle\,\pm0.030}$ & $0.681{\scriptstyle\,\pm0.020}$ & $0.947{\scriptstyle\,\pm0.005}$ & $0.300{\scriptstyle\,\pm0.080}$ \\
     & Antibody & $\mathbf{0.330}{\scriptstyle\,\pm0.030}$ & $\mathbf{0.699}{\scriptstyle\,\pm0.016}$ & $0.857{\scriptstyle\,\pm0.003}$ & $\mathbf{0.217}{\scriptstyle\,\pm0.020}$ & $\mathbf{0.596}{\scriptstyle\,\pm0.015}$ & $0.950{\scriptstyle\,\pm0.018}$ & $0.492{\scriptstyle\,\pm0.127}$ & $0.875{\scriptstyle\,\pm0.061}$ & $\mathbf{0.510}{\scriptstyle\,\pm0.038}$ \\
     & Ours on SFT-Aligned & $0.539{\scriptstyle\,\pm0.024}$ & $0.836{\scriptstyle\,\pm0.048}$ & $\mathbf{0.957}{\scriptstyle\,\pm0.013}$ & $0.536{\scriptstyle\,\pm0.022}$ & $0.822{\scriptstyle\,\pm0.030}$ & $0.902{\scriptstyle\,\pm0.050}$ & $0.548{\scriptstyle\,\pm0.027}$ & $0.857{\scriptstyle\,\pm0.059}$ & $0.325{\scriptstyle\,\pm0.087}$ \\
     & Ours on Antibody-Aligned & $0.444{\scriptstyle\,\pm0.031}$ & $0.758{\scriptstyle\,\pm0.022}$ & $0.863{\scriptstyle\,\pm0.014}$ & $0.364{\scriptstyle\,\pm0.004}$ & $0.740{\scriptstyle\,\pm0.016}$ & $\mathbf{0.958}{\scriptstyle\,\pm0.008}$ & $\mathbf{0.341}{\scriptstyle\,\pm0.041}$ & $\mathbf{0.777}{\scriptstyle\,\pm0.021}$ & $0.347{\scriptstyle\,\pm0.010}$ \\
    \midrule
    \multirow{6}{*}{\rotatebox[origin=c]{90}{Llama-3.1-8B (q50)}}
     & SFT-Aligned & $0.786$ & $0.781$ & $0.830$ & $0.786$ & $0.781$ & $0.850$ & $0.786$ & $0.781$ & $0.175$ \\
     & Antibody-Aligned & $0.426$ & $0.351$ & $0.775$ & $0.426$ & $0.351$ & $0.985$ & $0.426$ & $0.351$ & $0.110$ \\
    \cmidrule(l){2-11}
     & SFT & $0.910{\scriptstyle\,\pm0.003}$ & $0.913{\scriptstyle\,\pm0.005}$ & $0.995{\scriptstyle\,\pm0.005}$ & $0.915{\scriptstyle\,\pm0.005}$ & $0.914{\scriptstyle\,\pm0.004}$ & $0.980{\scriptstyle\,\pm0.013}$ & $0.894{\scriptstyle\,\pm0.017}$ & $0.888{\scriptstyle\,\pm0.013}$ & $0.355{\scriptstyle\,\pm0.071}$ \\
     & Antibody & $\mathbf{0.470}{\scriptstyle\,\pm0.012}$ & $\mathbf{0.378}{\scriptstyle\,\pm0.011}$ & $0.997{\scriptstyle\,\pm0.003}$ & $\mathbf{0.508}{\scriptstyle\,\pm0.008}$ & $\mathbf{0.410}{\scriptstyle\,\pm0.010}$ & $0.965{\scriptstyle\,\pm0.013}$ & $0.721{\scriptstyle\,\pm0.021}$ & $0.689{\scriptstyle\,\pm0.035}$ & $\mathbf{0.333}{\scriptstyle\,\pm0.021}$ \\
     & Ours on SFT-Aligned & $0.776{\scriptstyle\,\pm0.015}$ & $0.785{\scriptstyle\,\pm0.009}$ & $0.985{\scriptstyle\,\pm0.015}$ & $0.785{\scriptstyle\,\pm0.026}$ & $0.794{\scriptstyle\,\pm0.015}$ & $0.928{\scriptstyle\,\pm0.032}$ & $0.794{\scriptstyle\,\pm0.032}$ & $0.796{\scriptstyle\,\pm0.022}$ & $0.268{\scriptstyle\,\pm0.050}$ \\
     & Ours on Antibody-Aligned & $0.635{\scriptstyle\,\pm0.008}$ & $0.546{\scriptstyle\,\pm0.017}$ & $\mathbf{0.998}{\scriptstyle\,\pm0.003}$ & $0.641{\scriptstyle\,\pm0.012}$ & $0.565{\scriptstyle\,\pm0.002}$ & $\mathbf{0.985}{\scriptstyle\,\pm0.005}$ & $\mathbf{0.543}{\scriptstyle\,\pm0.020}$ & $\mathbf{0.443}{\scriptstyle\,\pm0.017}$ & $0.300{\scriptstyle\,\pm0.035}$ \\
    \bottomrule
  \end{tabular}}
\end{table*}